\documentclass[sigconf]{acmart}

\copyrightyear{2026}
\acmYear{2026}
\setcopyright{cc}
\setcctype{by}
\acmConference[ICSE '26]{2026 IEEE/ACM 48th International Conference on Software Engineering}{April 12--18, 2026}{Rio de Janeiro, Brazil}
\acmBooktitle{2026 IEEE/ACM 48th International Conference on Software Engineering (ICSE '26), April 12--18, 2026, Rio de Janeiro, Brazil}
\acmPrice{}
\acmDOI{10.1145/3744916.3787842}
\acmISBN{979-8-4007-2025-3/2026/04}

\usepackage{array}
\usepackage{url}             
\usepackage{colortbl}        
\usepackage{subcaption}      
\usepackage[inline]{enumitem}  
\usepackage{multirow}        
\usepackage{textcomp}        
\usepackage{xspace}          
\usepackage{soul}            
\usepackage{stmaryrd}

\newtheorem{dfn}{Definition}
\newtheorem{proposition}{Proposition}
\usepackage{algorithm, algpseudocode}
\usepackage{siunitx}        
\usepackage{makecell}       
\usepackage{fontawesome5}
\usepackage{pifont}

\newcommand{\D}{\mathcal{D}}
\newcommand{\M}{\mathcal{M}}
\newcommand{\mnistSymb}{${}_{\text{\faSignature}}$\faMarker\xspace}
\newcommand{\quoraSymb}{\faQuora\xspace}
\newcommand{\cifarSymb}{\faImages\xspace}
\newcommand{\tsignSymb}{\faMapSigns\xspace}
\newcommand{\sdcarSymb}{\faCarCrash\xspace}

\newcommand{\sys}{\textsc{TestifAI}\xspace}

\begin{document}

\title[\textsc{TestifAI}: 
Tomography-Based Testing
for Deep Learning Systems]{\textsc{TestifAI}: 
Tomography-Based Testing
for 
Deep Learning Systems}

\author{Arooj Arif}
\affiliation{%
  \institution{Northeastern University London}
  \city{London}
  \country{United Kingdom}
}
\email{arooj.arif@nulondon.ac.uk} 

\author{Tobias Hartung}
\affiliation{%
  \institution{Northeastern University London}
  \city{London}
  \country{United Kingdom}
}
\email{tobias.hartung@nulondon.ac.uk}  

\author{Elena Botoeva}
\affiliation{%
  \institution{University of Kent}
  \city{Canterbury}  
  \country{United Kingdom}
}
\email{e.botoeva@kent.ac.uk}  

\author{Alexandros Koliousis}
  \affiliation{%
  \institution{Northeastern University London}
  \city{London}
  \country{United Kingdom}
}
\email{alexandros.koliousis@nulondon.ac.uk}  

\begin{abstract}
As AI systems are increasingly deployed 
in safety-critical 
application 
domains (e.g., autonomous driving), 
associated risks increase
too.
Deep learning models underlying modern AI systems, 
therefore, must undergo thorough testing 
to ensure 
their 
correct behaviour.
A single robustness test involves thousands 
of inferences to empirically verify 
if a model's outputs remain
stable
under 
a
bounded perturbation 
of its inputs.
However, existing testing frameworks 
lack the means to 
systematically explore
and summarise robustness
across 
a  combinatorial 
space of perturbations. 

We propose \sys, 
a deep learning testing framework for 
efficient and accurate estimation of robustness against combinations of perturbations.
\sys
enables
users to specify 
operational conditions 
as structured spaces of semantic input perturbations
(e.g., image blur, brightness and zoom)
and 
discrete severity levels
(e.g., low, medium and high).
Users can query
model robustness
for any combination
(e.g., ``low blur, high brightness, and medium zoom'').
To achieve efficiency and accuracy, \sys
introduces \emph{partial model tomography}, a novel approach to
reconstructing model behaviour
in a multi-perturbation space
from tests
that apply only a small 
number of perturbations (lower-order projections).
To estimate robustness against at least three perturbations, \sys trains an auxiliary model on the results of tests involving up to two perturbations only, 
avoiding execution of an exponential number
of tests.
Our experiments
on five image and language
classification tasks
show that 
\sys can predict higher-order (3 and 4 perturbations) 
test outcomes
from low-order (1 and 2 perturbations) observations 
with an aggregate robustness estimation error of less than 7\%, 
while
reducing the number of inferences by 60--80\%.
\end{abstract}

\begin{CCSXML}
<ccs2012>
   <concept>
       <concept_id>10011007.10011074.10011099.10011102.10011103</concept_id>
       <concept_desc>Software and its engineering~Software testing and debugging</concept_desc>
       <concept_significance>500</concept_significance>
    </concept>
   <concept>
       <concept_id>10010147.10010257</concept_id>
       <concept_desc>Computing methodologies~Machine learning</concept_desc>
       <concept_significance>500</concept_significance>
       </concept>
 </ccs2012>
\end{CCSXML}

\ccsdesc[500]{Software and its engineering~Software testing and debugging}
\ccsdesc[500]{Computing methodologies~Machine learning}

\keywords{Deep learning testing,
Model robustness,
AI safety,
Combinatorial testing,
Input perturbations}

\maketitle

\section{Introduction}

Deep learning models 
are now 
integral 
to many
modern software systems,
building on their
success in computer vision
and language understanding.
tasks.
They are increasingly deployed in
high-stakes settings, powering
autonomous vehicles (e.g., 
for 
traffic sign classification~\cite{olivesgatech_CURETSD}
and lane detection~\cite{chen2024end}),
AI chatbots 
(e.g., 
for 
translation~\cite{bahdanau2015neural}, 
question
answering~\cite{wang2018glue}, and 
mental health support~\cite{guo2025chatbot})
and other real-world applications.

Deep learning models are notoriously sensitive 
(i.e., \emph{not} robust) 
to 
input
perturbations~\cite{goodfellow2015explaining,hendrycks2019benchmarking,machado2021adversarial}.
For example, vision models often misclassify images when exposed
to changes in
lighting, 
geometric distortions, or adverse weather
conditions~\cite{hendrycks2019benchmarking}. Similarly,
language models are vulnerable to 
misspellings~\cite{pruthi2019combating}, 
character flips~\cite{ebrahimi2018hotflip}, 
or paraphrases~\cite{morris2020textattack}.
Failures are not limited to 
``in vitro'' 
benchmarks: perception systems in self-driving cars 
have failed to 
identify 
lane markings in poor weather, 
contributing to accidents~\cite{reuters2024nhtsa}; and
chatbots have been
manipulated into producing 
harmful or inappropriate responses through subtle 
prompt variations~\cite{guo2025chatbot}.

As with any software, 
deep learning models 
must be thoroughly tested 
under 
their intended
operating conditions.
This includes 
testing 
assumptions about the training
data---e.g., is the 
input distribution $P(X)$ informative of 
the classification task $P(Y \vert X)$?---as 
well as
inductive biases of the 
model 
architecture---e.g., 
robustness to small image translations in convolutional models
and to word permutations in attention-based models~\cite{Goyal2022}.
In practice, this means checking 
whether a model maintains 
correct predictions
under input perturbations.
The simplest robustness
test applies 
a single semantic transformation (e.g., blurring images with a Gaussian kernel of radius 3, or substituting two words in sentences)
and verifies that predictions remain stable across all examples. 
However, such tests are inherently local: 
they evaluate one axis 
of variation at a time. 
Model performance, however, 
is often affected in new ways by multiple, 
interacting perturbations,
whose combined effects cannot be inferred from independent tests~\cite{hendrycks2020augmix, mu2019mnistc, chandrasekaran2021combinatorial}.
What is needed is a systematic method 
to explore and
summarise robustness 
in combinatorially rich perturbation spaces.
 
Existing deep learning testing techniques
either 
overlook interaction effects,
or 
lack mechanisms
to explore them.
They can be broadly grouped into four categories:

\begin{enumerate*}[label=(\textit{\roman*})]
\item 
\emph{Static robustness benchmarks}
define a fixed suite of semantic perturbations 
that simulate realistic deployment conditions (e.g., 
CIFAR-10-C~\cite{hendrycks2019benchmarking} 
and \textsf{TextFlint}~\cite{wang2021textflint}).
They typically discretise the perturbation intensity 
into severity levels (e.g., low, medium, high).
However, perturbations---whether simple (e.g., ``blur'') 
or complex (e.g., ``fog'')---are treated as atomic units.
As a result, users lack control over how 
perturbations interact, and 
cannot inspect or adjust 
their compositional structure.
\item 
\emph{Test prioritisation methods} 
use uncertainty or diversity estimates to 
select inputs 
that are more likely 
to trigger model failures~\cite{ma2021test,feng2020deepgini,gao2022adaptive}.
They can accelerate 
robustness evaluation by focusing on inputs 
that are most 
informative, especially when testing
worst-case rather than average model behaviour. But
they operate on a single-perturbation setting (i.e., one test)
and do not reason about interactions across perturbations.

\item \emph{Neuron coverage methods} attempt to quantify 
test adequacy using internal model representations, 
such as activation patterns across neurons~\cite{pei2017deepxplore,ma2018deepgauge,xie2019deephunter},
or generate new test examples by 
perturbing inputs in the latent space~\cite{dola2024cit4dnn}. 
However, coverage metrics 
do not align with semantically 
meaningful perturbations, 
nor do they account for compositional 
interactions between them.

\item \emph{Combinatorial methods} 
compose input perturbations, 
either stochastically~\cite{hendrycks2020augmix}
or exhaustively~\cite{chandrasekaran2021combinatorial},
to improve or evaluate model robustness.
However, they provide no systematic way 
to assess robustness across the full perturbation space.
\end{enumerate*}

In this paper, we introduce \sys, a test framework for deep learning models
that enables exploration of their robustness across
multi-perturbation scenarios. 
The key contribution
is \emph{partial model tomography}. 
Rather than exhaustively evaluating all combinations 
of multiple perturbations and severity levels, 
\sys executes only a subset of tests---specifically, 
those involving one or two perturbations at a time---and 
learns to predict the rest.
In other words, it estimates higher-order 
robustness by reconstructing 
the full perturbation space 
from its lower-order projections.

We demonstrate the efficacy of \sys by training 
a random forest on sampled first- and second-order tests, 
and use it to approximate the robustness for untested, 
higher-order configurations. We use five realistic
benchmarks---four vision
and one language classification tasks---to evaluate it and
show that \sys can estimate robustness 
across the full test space, including 
all 3-way and 4-way combinations, with 
minimal approximation error.

\sys is a complete test framework and makes two further contributions:
(\emph{i}) it supports interactive analysis through a Boolean query interface, 
allowing
users to query robustness over perturbation types and severity levels; 
(\emph{ii}) its implementation includes 
an adaptive sampling strategy 
with early stopping that reduces the number 
of inferences required 
per test 
by detecting 
convergence.

\section{Testing Deep Learning Models}

Deep learning classifiers---the focus of our work---implicitly learn 
decision boundaries from data, namely input features $X$  
and 
their corresponding 
labels $Y$, by modelling the
conditional distribution 
$P(Y~\vert~X)$. 
Testing 
evaluates
a model's
statistical behavior 
when 
the 
input distribution $P(X)$ shifts
due to one or more perturbations:
we apply 
controlled changes to inputs $x \in X$ that 
are expected to preserve their true label $y \in Y$.
A \emph{robust classifier} should maintain consistent predictions for perturbed versions 
of the same example $(x, y)$.
However, perturbations can also expose cases that change 
the model's approximation of $P(Y~\vert~X)$, 
revealing the brittleness of the
classifier's learned decision boundary.

\subsection{Metamorphic tests}
\label{Metamorphic tests}

We interpret perturbations as instances of \emph{metamorphic relations}---expected invariances under small, label-preserving transformations~\cite{togru2024enhancing}.
We consider a deep learning model $\mathcal{M}$, a labelled test set $\mathcal{D}=\{(x_1,y_1), \dots, (x_n, y_n)\}$, and a set of parameterised perturbations $\mathcal{P} = \{ p_1, \dots, p_n\}$, where each $p_i$ is a perturbation function (e.g., image rotation). 
We further define the corresponding \emph{severity level} sets $S_1, \dots, S_n$ (e.g., $S_i=\{0,1,2,3,4,5\}$, where 0 denotes no rotation, 
1 rotation up to $10^\circ$, etc.) and the set of severity combinations $\mathcal{S} = S_1\times S_2\times\dots\times S_n$.

\begin{dfn}[Robustness]
\normalfont
Let \(\boldsymbol{\sigma}\in \mathcal{S}\) denote 
a
configuration of perturbations
(e.g., \(\boldsymbol{\sigma}=(0,4,2)\) or \(\boldsymbol{\sigma}=(1,2,5)\) for $n=3$), and let \( \pi_{\boldsymbol{\sigma}}(x) \) be the 
composite perturbation of input $x$ 
induced by \( \boldsymbol{\sigma} \).
The {\emph{robustness}} score \(\mathrm{r}_{\boldsymbol{\sigma}}\) is 
the fraction of inputs in $\D$ on which \(\M\) remains 
correct under \(\boldsymbol{\sigma}\):
\begin{equation}
  \label{eq:local-robustness}
  \mathrm{r}_{\boldsymbol{\sigma}}
  =\frac{1}{|\mathcal{D}|}\sum_{i=1}^{|\mathcal{D}|}
  \mathbf{1}\bigl[\mathcal{M}\bigl(\pi_{\boldsymbol{\sigma}}(x_i)\bigr)=y_i\bigr]\,.
\end{equation}
Here \(\mathbf{1}[\cdot]\) is an indicator function that returns 1 if the condition in the brackets is satisfied and 0 otherwise. 
We 
view
\autoref{eq:local-robustness} as a {\emph{system-level metamorphic test}}, or simply a \emph{test}.
We may refer to a test by its 
configuration \({\boldsymbol{\sigma}}\)
or 
by its result \(\mathrm{r}_{\boldsymbol{\sigma}}\).
\end{dfn}

\begin{dfn}[Aggregate Robustness]
\normalfont
Given a suite $\Theta$ of \(N\) tests, \(\Theta=\{\boldsymbol{\sigma}_i\}_{i=1}^N\), the
{\emph{aggregate robustness}} score $\mathrm{R}(\Theta)$ 
of $\Theta$ is the average 
of the robustness scores of all tests in $\Theta$:
\begin{equation}
  \label{eq:global-robustness}
  \mathrm{R}(\Theta)
  =\frac{1}{N}\sum_{i=1}^{N} \mathrm{r}_{\boldsymbol{\sigma}_i}\,.
\end{equation}
A high aggregate robustness score \(\mathrm{R}(\Theta)\) indicates that, on average, \(\mathcal{M}\) has high accuracy across the entire set of tested perturbations.
\end{dfn}

\subsection{Test spaces}

Deep learning models deployed 
in real-world environments
must remain robust under a wide range of perturbations.
For example, vision models in modern cars
must handle varied lighting 
conditions
(e.g., driving through tunnels),
adverse weather (e.g., rain or snow),
and unpredictable human behaviour~\cite{beigi2025impact,geiger2013kitti}.
Similarly, language models must cope with
word substitutions, 
negations, and 
typographic errors~\cite{ribeiro2020beyond,wang2021textflint}.

Several benchmarks 
simulate
real-world perturbations for 
specific
application domains.
MNIST-C~\cite{mu2019mnistc}, for example,
defines fifteen perturbation types for hand-written digit recognition ({\footnotesize \mnistSymb}). Similarly,
\(\{\text{CIFAR-10}, \text{ImageNet}\}\)-C~\cite{hendrycks2019benchmarking},
DeepTest~\cite{tian2018deeptest},
CURE-TSD~\cite{cure-tsr},
and
CheckList~\cite{ribeiro2020beyond}
define
perturbations
for {image} classification ({\footnotesize \cifarSymb}),
self-driving scenarios ({\footnotesize \sdcarSymb}),
traffic sign recognition ({\footnotesize \tsignSymb}),
and
question answering ({\footnotesize \quoraSymb}), respectively.

Besides 
realism,
these benchmarks share
another desirable property: 
most assign discrete 
severity levels---typically 
from 1 to 5---to each perturbation, 
making them suitable 
for systematic testing. 
Some perturbations are relatively \emph{atomic}, 
applying simple transformations to inputs
(e.g., brightness or blur), while
others represent more complex scenarios 
that 
combine multiple effects (e.g., fog or glare). 
Although benchmarks may define \emph{composite} perturbations manually, 
it is infeasible to anticipate 
all combinations 
in advance.
A lane-detection model, for instance, 
may behave correctly under fog or motion blur individually, 
but fail when both are present.
We must therefore move beyond 
isolated transformations 
and consider how perturbations interact~\cite{mintun2021interaction, gao2019automated, dalva2023benchmarking, UAV-C, hendrycks2020augmix, hendrycks2022pixmix, yun2019cutmix, calian2021defending}.

\emph{Why combine more than two perturbations?}
Real-world image and text inputs 
rarely vary along only a single axis.
For example, Multi-Weather City~\cite{mușat2021multi} 
combines three or more weather effects 
to model realistic conditions, and 
ReCode~\cite{wang2023recode} defines over thirty 
semantic-preserving text transformations 
that commonly co-occur in mixed 
natural language-code inputs. 
Higher-order perturbations also arise in training 
(e.g., AugMix~\cite{hendrycks2020augmix} 
augments images with multiple ImageNet-C corruptions) 
and in neural architecture design 
(e.g., spatial transformers~\cite{jaderberg2015spatial} 
apply compound geometric shifts). 
These examples motivate testing 
\emph{higher-order} perturbation combinations 
rather than isolated transformations.
We further discuss the validity of 
composite perturbations in \S\ref{sec:validity}.

\emph{The challenge is to define a test space that enables the exploration of structured combinations of perturbations.}

\subsection{Combinatorial testing}

While testing combinations of perturbations 
is essential, 
it introduces a classic challenge---\emph{combinatorial explosion}.
Even for modest settings,
the number 
of possible tests 
grows exponentially
with the number of perturbations, 
making
exhaustive testing infeasible.

\emph{Combinatorial Interaction Testing} (CIT)~\cite{luo2024beyond,bombarda2024design,chandrasekaran2021combinatorial,krishnan2007combinatorial}
is a principled approach 
to address
the combinatorial explosion problem.
Instead of testing all combinations of
$k$ perturbations, 
CIT constructs a minimal set of 
tests---a \emph{covering array}---that
guarantees $t$-way coverage
of all interactions among any
$t < k$ perturbations.
For example,
a $2$-way covering array over three perturbations
ensures
that all severity-level 
combinations across any pair of perturbations 
will be tested, leaving the remaining perturbations unconstrained.
Higher $t$ increases coverage but also
the computational cost~\cite{chandrasekaran2021combinatorial}.

CIT assumes 
that most failures 
are triggered by low-order interactions 
(i.e., involving only a small number of perturbations)
and 
seeks to expose them through systematic coverage. 
In deep learning, CIT has been 
used to generate
diverse test inputs, 
either by sampling $2$-way combinations 
of real-world perturbations~\cite{chandrasekaran2021combinatorial} 
or by perturbing latent representations 
to generate new inputs~\cite{dola2024cit4dnn}.
However, these methods focus on diversity and coverage,
not on
modelling or predicting 
how interactions affect model behaviour.

\emph{The challenge is to predict model behaviour 
under higher-order (i.e., multi-perturbation) combinations.}

\section{Partial Model Tomography}

\begin{figure*}[t]
    \centering
    \begin{subfigure}[b]{0.15\textwidth}
        \includegraphics[width=\textwidth]{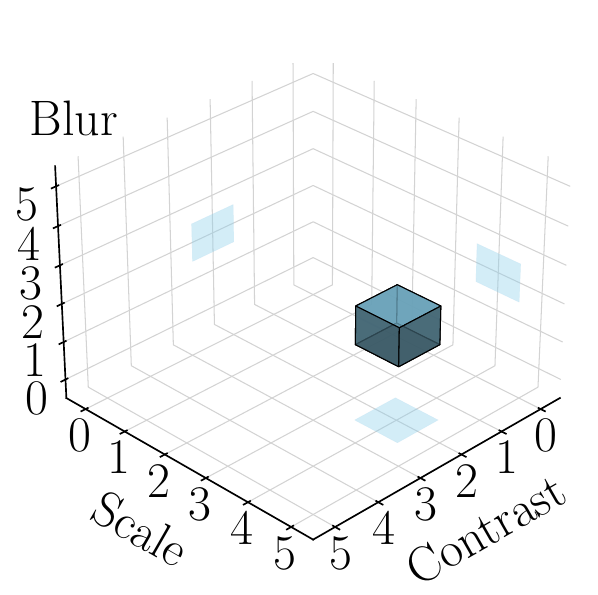}
        \caption{$\boldsymbol{\sigma}=(2,4,2)$}
        \label{fig:three-a}
    \end{subfigure}
    \begin{subfigure}[b]{0.15\textwidth}
        \centering
         \includegraphics[width=0.8\textwidth]{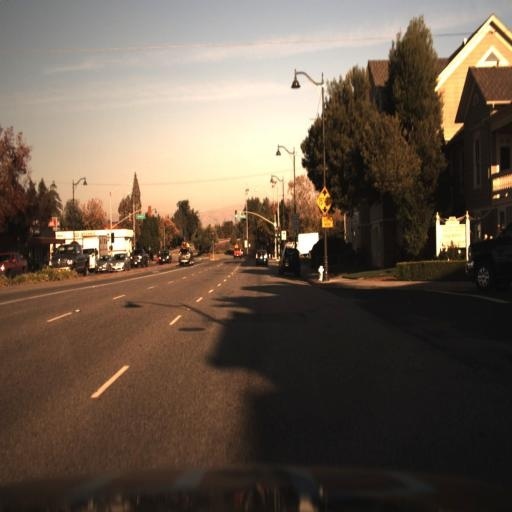}
        \caption{Original image}
        \label{fig:three-b}
    \end{subfigure}
    \begin{subfigure}[b]{0.15\textwidth}
        \centering
        \includegraphics[width=0.8\textwidth]{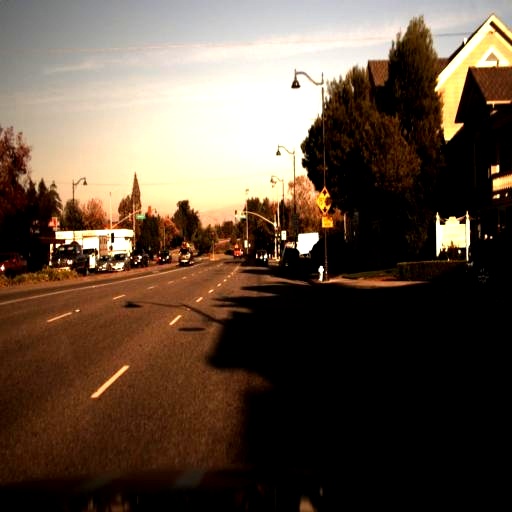}
        \caption{Contrasted}
        \label{fig:three-c}
    \end{subfigure}
    \begin{subfigure}[b]{0.15\textwidth}
        \centering
        \includegraphics[width=0.8\textwidth]{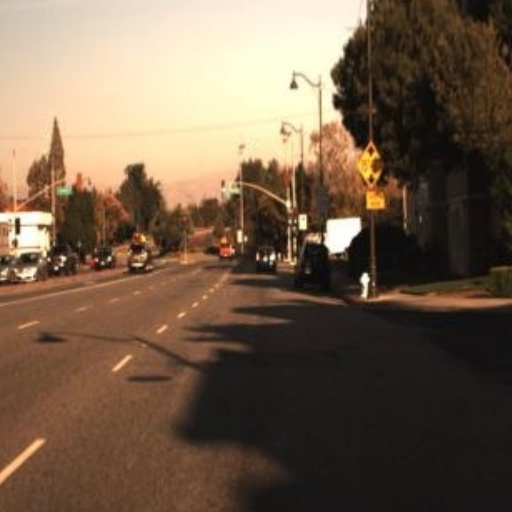}
        \caption{Scaled}
        \label{fig:three-d}
    \end{subfigure}
   \begin{subfigure}[b]{0.15\textwidth}
        \centering
        \includegraphics[width=0.8\textwidth]{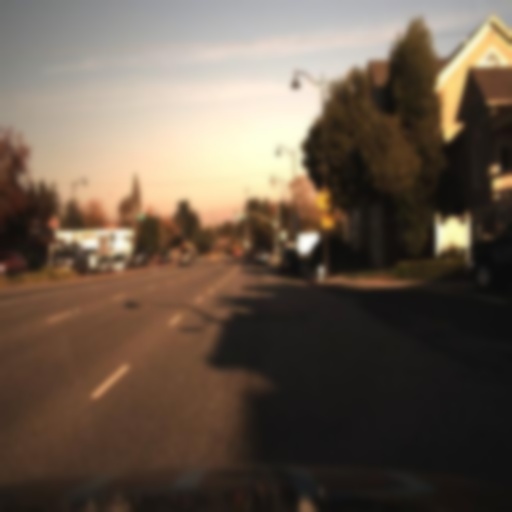}
        \caption{Blurred}
        \label{fig:three-e}
    \end{subfigure}
    \begin{subfigure}[b]{0.15\textwidth}
        \centering
        \includegraphics[width=0.8\textwidth]{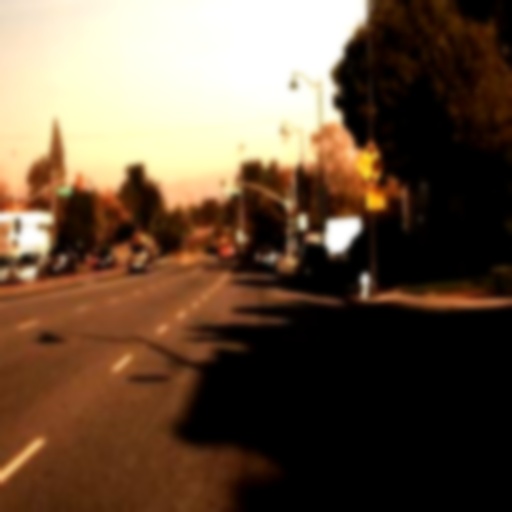}
        \caption{Combined}
        \label{fig:three-f}
    \end{subfigure}

    \caption{Composite perturbations 
    combine the effects of individual perturbations. 
    (\emph{a}) 
    A test with configuration $\boldsymbol{\sigma}=(2,4,2)$,
    indicating 
    \textsf{contrast}, 
    \textsf{scale},
    and \textsf{blur} 
    at severity levels $2$, $4$ and $2$, respectively.
    (\emph{b}) 
    Original input image.
    (\emph{c}--\emph{e}) 
    Effects of exactly one perturbation at the specified severity level.
    (\emph{f}) 
    Combined effect of the composite perturbation.}
    \label{fig:three}
\end{figure*}

The main idea behind \sys is to 
prioritise tests 
to generate a relevant 
approximate model tomography---we term 
this \emph{partial model tomography}.
Our three key assumptions are: (\emph{i}) input data 
can undergo atomic semantic 
perturbations (e.g., scale an image, 
blur it, \emph{or}
add contrast to it; see Figures~\ref{fig:three-b}--\ref{fig:three-e});
(\emph{ii})
each perturbation has discrete severity levels 
(e.g., a scale from $0$ to $5$, where $0$ is negligible and $5$ the most severe level);
and, finally, (\emph{iii}) multiple perturbations 
can be performed in combination 
(e.g., transform an image by 
applying scale, blur \emph{and}
contrast; see Figures~\ref{fig:three-a} and \ref{fig:three-e}).

\emph{Full model tomography} requires an exponential number of tests stemming from all possible combinations of perturbations and their severity levels.  
When the order of perturbation application does not matter, each unique combination of severity levels defines a test. If there are $n$ perturbations and each perturbation $p_j$ has $|S_j|$ severity levels, then there are $\prod_{j=1}^n |S_j|$ 
tests. 
If perturbation order matters, this number grows by a factor of $k!$ for each test with $k$ perturbations at non-zero severity. 
For instance, given three perturbations 
with six severity levels each, as in \autoref{fig:three-a}, 
there are $6^3=216$ unordered and $1 + 15 + 2!\cdot 75+ 3!\cdot 125 = 916$ ordered tests.

To address this combinatorial bottleneck, we propose predicting the robustness score of higher-order tests from \emph{low-order} tests. 
A \emph{$k$-order test} 
involves 
exactly $k$ distinct perturbations with non-zero severity. 
For example, using 
\autoref{fig:three-a} and ignoring 
order,
there is 
one zero-order test
(scale, contrast and blur at severity $0$);
$15$ first-order tests 
with 
exactly one perturbation at non-zero severity (e.g., $\mathsf{scale} > 0$);
$75$ second-order tests 
with
exactly two perturbations at non-zero severity level;
and
$125$ third-order tests 
where all three perturbations are applied 
with non-zero severity.

Each test provides robustness information $\mathrm{r}_{\sigma_1,\ldots,\sigma_k}$,
representing the probability 
that
the model correctly 
classifies
an input when each perturbation $p_j$ is applied at severity level~$\sigma_j$. If perturbations act independently, 
first-order robustness values can be used to predict higher-order ones. For example, in the 3-dimensional perturbation space of \autoref{fig:three-a}, suppose 
$\mathrm{r}_{2,0,0}=0.9$,  $\mathrm{r}_{0,4,0}=0.8$, and $\mathrm{r}_{0,0,2}=0.7$. Under 
independence, 
we 
expect $\mathrm{r}_{2,4,0} = 0.9 \times 0.8 = 0.72$ and $\mathrm{r}_{2,4,2} = 0.9 \times 0.8 \times 0.7 = 0.504$. In other words, if perturbations 
behave independently, partial tomography based solely on first-order tests would 
suffice
to reconstruct the full model tomography.

Unfortunately, we do not 
know \emph{a priori}
whether perturbations act independently or not. However, we can empirically test for independence by comparing first-order tomography results against second-order ones. If the observed second-order values align with predictions derived from first-order data---within a statistical margin---we can treat the corresponding tests as independent and assume this independence holds for higher-order combinations as well.
Returning to 
\autoref{fig:three-a}, 
we could execute the $90$ first- and second-order tests (15 and 75, respectively)
and 
compare the $75$ second-order results against predictions derived from the $15$ first-order ones. If they behave independently, then we could predict the remaining $125$ third-order tests without executing them. 
Independence testing can be achieved with a standard $\chi^2$-test.

\begin{proposition}
\label{prop:chi2}
\normalfont
For a fixed perturbation configuration $\boldsymbol{\sigma} = (\sigma_1, \ldots, \sigma_k)$, let $\mathrm{r}_{\boldsymbol{\sigma}}$ denote the true success probability of a test at severity levels $\sigma_1,\ldots,\sigma_k$, and let $\hat{\mathrm{r}}_{\boldsymbol{\sigma}}$ denote the empirical success rate over $N$ samples.  
Under the assumption that perturbations act independently, the statistic
\[
\chi^2 := \sum_{\boldsymbol{\sigma} \in \mathcal{S}} \frac{( \hat{\mathrm{r}}_{\boldsymbol{\sigma}} - \mathrm{r}_{\boldsymbol{\sigma}} )^2}{ \mathrm{r}_{\boldsymbol{\sigma}} (1 - \mathrm{r}_{\boldsymbol{\sigma}}) / N }
\]
is approximately $\chi^2$-distributed with $|\mathcal{S}|$ degrees of freedom, where $\mathcal{S}$ denotes the set of evaluated severity configurations. 
\end{proposition}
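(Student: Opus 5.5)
The plan is the classical CLT argument behind Pearson's statistic, applied one configuration at a time and then summed. Fix $\boldsymbol{\sigma}\in\mathcal{S}$. The empirical rate $\hat{\mathrm{r}}_{\boldsymbol{\sigma}}$ is an average of $N$ indicators $\mathbf{1}[\mathcal{M}(\pi_{\boldsymbol{\sigma}}(x_i))=y_i]$. I would model these as i.i.d.\ Bernoulli$(\mathrm{r}_{\boldsymbol{\sigma}})$ variables, i.e.\ treat the $N$ inputs as independent draws from the data distribution. Then $N\hat{\mathrm{r}}_{\boldsymbol{\sigma}}\sim\mathrm{Bin}(N,\mathrm{r}_{\boldsymbol{\sigma}})$, with mean $\mathrm{r}_{\boldsymbol{\sigma}}$ and variance $\mathrm{r}_{\boldsymbol{\sigma}}(1-\mathrm{r}_{\boldsymbol{\sigma}})/N$ for $\hat{\mathrm{r}}_{\boldsymbol{\sigma}}$.

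By the de Moivre--Laplace (central limit) theorem, the standardised quantity
\[
Z_{\boldsymbol{\sigma}} := \frac{\hat{\mathrm{r}}_{\boldsymbol{\sigma}}-\mathrm{r}_{\boldsymbol{\sigma}}}{\sqrt{\mathrm{r}_{\boldsymbol{\sigma}}(1-\mathrm{r}_{\boldsymbol{\sigma}})/N}}
\]
converges in distribution to $\mathcal{N}(0,1)$ as $N\to\infty$, provided $0<\mathrm{r}_{\boldsymbol{\sigma}}<1$. Consequently $Z_{\boldsymbol{\sigma}}^2$ is approximately $\chi^2_1$. The word ``approximately'' in the statement is read as this asymptotic regime, with the usual rule of thumb that $N\mathrm{r}_{\boldsymbol{\sigma}}$ and $N(1-\mathrm{r}_{\boldsymbol{\sigma}})$ are not small. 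A Berry--Esseen bound could be cited if a quantitative error is wanted.

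Next I would argue that the $Z_{\boldsymbol{\sigma}}$ for distinct $\boldsymbol{\sigma}$ are (asymptotically) independent, so that the sum of $|\mathcal{S}|$ independent $\chi^2_1$ variables is $\chi^2_{|\mathcal{S}|}$. For example, the joint characteristic function factorises into a product of $\chi^2_1$ characteristic functions. This independence step is where I expect the main obstacle. In the paper's setup, every test reuses the same dataset $\mathcal{D}$, so the indicator outcomes for different configurations on the same $x_i$ are correlated. The multivariate CLT then gives a jointly Gaussian vector $(Z_{\boldsymbol{\sigma}})$ with a non-diagonal covariance, and $\sum Z_{\boldsymbol{\sigma}}^2$ is a weighted sum of $\chi^2_1$ variables rather than $\chi^2_{|\mathcal{S}|}$.

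I would handle this by making the sampling assumption explicit: each configuration is evaluated on its own independently drawn sample of $N$ inputs, as in the adaptive sampling scheme. Under that assumption the Bernoulli samples across configurations are independent, and the sum has exactly the stated limit. Without it, I would only claim that the statistic is approximately $\chi^2$ up to a covariance correction.

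The second subtlety is the meaning of the null hypothesis ``perturbations act independently''. Its only role is to supply a fully specified value for $\mathrm{r}_{\boldsymbol{\sigma}}$ for each higher-order $\boldsymbol{\sigma}$, namely the product of the first-order rates, as in the worked example before the proposition. Under the null, $\hat{\mathrm{r}}_{\boldsymbol{\sigma}}$ is centred at this value.

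I would treat these products as known rather than estimated. If they are instead plugged in from empirical first-order rates, estimation costs degrees of freedom and the limiting count drops accordingly. I would note this as a remark rather than prove it, since the proposition states $|\mathcal{S}|$ degrees of freedom.
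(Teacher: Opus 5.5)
Your proposal is correct and follows essentially the same route as the paper: model each test as $\mathrm{Binomial}(N,\mathrm{r}_{\boldsymbol{\sigma}})$, standardise to $Z_{\boldsymbol{\sigma}}$, apply the normal approximation, and sum the squares. The paper also adds the sample-size condition $N > 9\max\{(1-\mathrm{r}_{\boldsymbol{\sigma}})/\mathrm{r}_{\boldsymbol{\sigma}},\ \mathrm{r}_{\boldsymbol{\sigma}}/(1-\mathrm{r}_{\boldsymbol{\sigma}})\}$, which it attributes to Berry--Esseen. The paper simply asserts the $\chi^2_{|\mathcal{S}|}$ limit without addressing dependence across configurations, so your explicit independent-samples assumption makes the argument more careful than the original. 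That assumption is an idealisation, though, not what the paper's scheme does (contrary to your ``as in the adaptive sampling scheme''): Algorithm~\ref{alg:testifai_full} evaluates every configuration on (subsets of) the same $\mathcal{D}$.
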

\begin{proof}
For any fixed perturbation configuration $\boldsymbol{\sigma} = (\sigma_1, \ldots, \sigma_k)$, the outcome of a test over $N$ samples follows a binomial distribution:
\[
X_{\boldsymbol{\sigma}} \sim \mathrm{Binomial}(N, \mathrm{r}_{\boldsymbol{\sigma}}),
\quad \text{and} \quad \hat{\mathrm{r}}_{\boldsymbol{\sigma}} := \frac{X_{\boldsymbol{\sigma}}}{N}.
\]
When $N$ is sufficiently large, the binomial distribution is well-approximated by a normal distribution
\[
\hat{\mathrm{r}}_{\boldsymbol{\sigma}} \approx \mathcal{N}\left(\mathrm{r}_{\boldsymbol{\sigma}}, \frac{\mathrm{r}_{\boldsymbol{\sigma}}(1 - \mathrm{r}_{\boldsymbol{\sigma}})}{N} \right)
\]
and we can define the standardised residual
\[
Z_{\boldsymbol{\sigma}} := \frac{ \hat{\mathrm{r}}_{\boldsymbol{\sigma}} - \mathrm{r}_{\boldsymbol{\sigma}} }{ \sqrt{ \mathrm{r}_{\boldsymbol{\sigma}} (1 - \mathrm{r}_{\boldsymbol{\sigma}}) / N } }.
\]
Under the null hypothesis of independence (i.e., that $\mathrm{r}_{\boldsymbol{\sigma}}$ is accurately predicted from lower-order data), each $Z_{\boldsymbol{\sigma}}$ is approximately standard normal. Therefore, the sum:
\[
\chi^2 := \sum_{\boldsymbol{\sigma} \in \mathcal{S}} Z_{\boldsymbol{\sigma}}^2
\]
is approximately $\chi^2$-distributed with $|\mathcal{S}|$ degrees of freedom.
To ensure the normal approximation is valid, the Berry–Esseen theorem implies that $N$ should satisfy:
\[
N > 9 \cdot \max\left\{ \frac{1 - \mathrm{r}_{\boldsymbol{\sigma}}}{\mathrm{r}_{\boldsymbol{\sigma}}}, \frac{ \mathrm{r}_{\boldsymbol{\sigma}} }{1 - \mathrm{r}_{\boldsymbol{\sigma}}} \right\}.
\]
Under this condition, the $\chi^2$-test provides a statistically justified method to evaluate the independence assumption.
\end{proof}
Once dependent and independent perturbations have been identified, the number of required higher-order tests can be significantly reduced. In the ideal case where all first-order perturbations are independent, 
all third-order tests---125 in our 
example, 58\% of the total---can be estimated 
from first-order 
tests 
rather than executed.

If only some perturbations are independent, then higher-order tests can be checked against mutual independence of all constituents. 
For example, if 
\textsf{blur} and \textsf{scale}, 
\textsf{scale} and \textsf{contrast},
and 
\textsf{contrast} and \textsf{blur} are 
all found pairwise independent, 
then the triplet
\textsf{blur-scale-contrast} 
is likely independent as well under realistic scenarios. 
Thus, any higher order test 
that only depends on 
pairwise independent perturbations
can be directly estimated as the product of first-order tests. 
A higher-order test that has some independent and some dependent perturbations can similarly be estimated 
by multiplying the independent first-order tests 
with a set of remaining dependent tests which are lower order, 
thus reducing the computational cost. 

More generally, however, lower-order results can be used to train predictive models that \emph{account for dependencies} when estimating higher-order outcomes. This approach dramatically reduces the computational cost of tomography while maintaining accuracy in user-specified test environments. We demonstrate this in \S\ref{sec:random-forest}, where we train a random forest on first- and second-order test results to predict third-order outcomes and beyond.

\section{TestifAI}

We describe \sys, a test framework for deep learning models that performs partial tomography. We begin with an overview of the framework's architecture, followed by a description of how users interact with it (\S\ref{sec:user-interactions}), how lower-order tests are executed efficiently (\S\ref{sec:sample-efficiency}), and how higher-order tests are predicted using a learned model (\S\ref{sec:random-forest}).

A test session with \sys comprises four stages:
(\emph{i}) users 
specify
a test environment for their model by 
selecting
perturbations and their severity levels that best characterise the application domain; (\emph{ii}) \sys 
automatically 
generates and executes all first- and second-order tests within that test environment; (\emph{iii}) based on these results, \sys trains an auxiliary predictive model---in our case, a random forest---to approximate the model's full tomography space; (\emph{iv}) users can then query \sys to estimate robustness 
over
regions of 
the
test environment.

For illustrative purposes, 
we
focus 
on \textbf{3D tomography}, where the test environment consists of three 
perturbations. In \S\ref{sec:result2}, we further explore the generality of our approach to 4D
tomography.

\subsection{Specifying tests}
\label{sec:user-interactions}

Users can customise testing both before and after tomography.
While \sys supports a wide range of perturbations and severity levels, users may choose to \emph{setup} a customised test environment for their model---filtering specific perturbations, severity levels, or both. For example, the use of the \textsf{zoom} perturbation might be constrained by a camera's focal range.
After \sys estimates the full tomography space, users may interactively \emph{query} specific regions to assess model robustness or refine their setup---for instance, to determine whether a given severity level meaningfully impacts model robustness or not.

\subsubsection{Test Setup}

Users begin with a trained
model \(\mathcal{M}\) 
(e.g., \textrm{ResNet-32}~\cite{croce2021robustbench})
and an evaluation data set \(\mathcal{D}\) (e.g., the 10,000 test images of CIFAR-10~\cite{CIFAR10}).
Having assessed standard accuracy, they now aim to evaluate the model's robustness.

During setup, \sys
enables users to select $k$ perturbations 
$p_{1},\dots,p_{k}$
from a predefined set, 
along with their associated severity levels,
to define a test environment.
 For example, 
 CIFAR-10-C~\cite{hendrycks2019cifar10c} includes $15$ 
 well-defined perturbations, 
 each with \textit{6} severity levels. 
 Alternatively, users may provide 
 custom perturbation functions 
 tailored to their application 
 or data domain.
 
\sys defines the full tomography space 
as the set of all possible severity configurations 
across the selected perturbations:
\[
  \Theta 
  = S_{1} \times \cdots \times S_{k},
\]
where $S_{j}$ is the set of selected 
severity levels for perturbation $p_{j}$.
Each element \(\boldsymbol{\sigma} = (\sigma_1,\dots, \sigma_k) \in \Theta\) represents a single \emph{test}---a complete assignment of severity levels across all $k$ perturbations that must be either executed or predicted.

\subsubsection{Querying $\Theta$}

After \sys 
learns robustness estimates across the space
$\Theta$, users can express a robustness query as a Boolean expression over perturbation severities:
\[
  Q \;::=\;
    (p_j~\mathsf{op}~\sigma_j)
    \;\big|\;
    Q \land Q
    \;\big|\;
    Q \lor Q,
\]
where 
\((p_j~\mathsf{op}~\sigma_j)\) is an 
\emph{atomic constraint} 
defining the considered range of values for perturbation \(p_j\), 
where $\mathsf{op}$ is a comparison operator \(\mathsf{op} \in \{<,\leqslant,=,\geqslant,>\}\) and 
\(\sigma_j \in S_j\) a severity level
(e.g., ``\textsf{blur}~$>2$'').
The compound expressions
\(Q \land Q\) 
and 
\(Q \lor Q\) 
denote the logical conjunction 
and disjunction of two subqueries, 
respectively.
For example,
\(
  Q = (\textsf{zoom}>2)\;\lor\;(\textsf{brightness}=5\land\textsf{blur}=1)
\)
selects all tests where the zoom severity exceeds $2$, or where both brightness is set to $5$ and blur is set to $1$.

A query $Q$ represents a subset of the tomography space $\Theta$. We denote by 
$\llbracket Q \rrbracket$ the set of tests
$\boldsymbol{\sigma} \in \Theta$
that satisfy the query expression~$Q$.
Since the robustness $\mathrm{r}_{\boldsymbol{\sigma}}$ is 
known---either measured or predicted---for every test, \sys computes the aggregate robustness 
of $Q$ using \autoref{eq:global-robustness}:
$\mathrm{R}(Q) := \mathrm{R}(\llbracket Q \rrbracket)$.

\begin{algorithm}[t]
\small
\caption{Constructing the training dataset for partial tomography by selecting and evaluating tests from $\Theta_{\leqslant 2}$, the space of first- and second-order perturbation combinations.}
\label{alg:testifai_full}
\begin{algorithmic}[1]
\State \textbf{Inputs:}
Trained model $\mathcal{M}$; dataset $\mathcal{D}$; tests $\Theta_{\leq 2}$; batch size $b$; threshold $\delta$; window size $w$
\State \textbf{Output:} Training set $\mathcal{T}$
\State \(\mathcal{T} \gets \emptyset \)
\ForAll{$\boldsymbol{\sigma} \in \Theta_{\leq 2}$}
  \State Partition $\mathcal{D}$ into batches $\mathbb{B} = \{B_1, B_2, \dots, B_n\}$ of size $b$ \label{algline:batch}
  \State \( \mathfrak{h}_{\boldsymbol{\sigma}} \gets [\,0\,]^n \) \Comment{Reset history}
  \State $i \gets 1$
  \ForAll{$B \in \mathbb{B}$}
    \State $\tilde{B} \gets \{\,(\pi_{\boldsymbol{\sigma}}(x), y) \mid (x, y) \in B \,\}$ \Comment{ Apply perturbation} \label{algline:transform}
    \State $\mathcal{T} \gets \mathcal{T} \cup \{\, \big(\boldsymbol{\sigma}, \ \mathbf{1}[\mathcal{M}(\tilde{x}) = y]\big) \mid (\tilde{x}, y) \in \tilde{B} \,\}$ \Comment{Coll. tr. data} \label{algline:train}
    \State $\mathfrak{h}_{\boldsymbol{\sigma}}^{(i)} \gets \frac{1}{b} \displaystyle\sum_{(\tilde{x}, y) \in \tilde{B}} \mathbf{1}\bigl[\mathcal{M}(\tilde{x}) = y\bigr]$ \Comment{Store partial result} \label{algline:hist}
    \If{ $i \geqslant w \text{ and } \displaystyle\max_{j=i-w+1}^{i} \left|\,\, \mathfrak{h}_{\boldsymbol{\sigma}}^{(j)} - \mathfrak{h}_{\boldsymbol{\sigma}}^{(j-1)} \,\,\right| < \delta$ } \label{algline:stop}
    \State \textbf{break}
    \EndIf
    \State $i \gets i + 1$
  \EndFor
\EndFor
\end{algorithmic}
\end{algorithm}

\subsection{Executing tests}
\label{sec:sample-efficiency}

\sys evaluates all first- and second-order tests and uses these results to predict robustness of third-order tests and beyond. 
We partition the full tomography space $\Theta$
by the number of active perturbations.
We define the \emph{$t$-th order subset $\Theta_t$}, for $t \leqslant k$, as:
\[
\Theta_t
  = \left\{
     \boldsymbol{\sigma} = (\sigma_1,\dots,\sigma_k) \in \Theta\phantom{^i}
     \;\middle|\;
     \text{$\boldsymbol{\sigma}$ has exactly $t$ non-zero $\sigma_i$-values}
     \right\}.
\]
In other words, $\Theta_t$ 
consists of all tests where exactly 
$t$ perturbations 
are applied with non-zero severity. 
For example, in 3D tomography ($k=3$), 
the tomography space is partitioned into
$\Theta_0$, $\Theta_1$, $\Theta_2$, and $\Theta_3$,
corresponding to zeroth-, first-, second-, and 
third-order tests, respectively. $\Theta_0$ only contains the ``no-perturbations at all'' case, i.e., it is merely the base model accuracy. 

\sys evaluates the robustness 
of all tests in the set \(\Theta_{\leqslant2} = \Theta_0 \cup \Theta_1 \cup \Theta_2\)
by applying each perturbation 
configuration \(\boldsymbol{\sigma} \in \Theta_{\leqslant 2}\)
to inputs from the dataset \(\mathcal{D}\), 
running the model to infer 
the label of each perturbed input, 
and recording success or 
failure based on label correctness (Algorithm~\ref{alg:testifai_full}).
{With $k$ perturbations and $m$ severity levels per perturbation,
$|\Theta_1| = km$ and $|\Theta_2| = \binom{k}{2} m^2$. 
So the total number of tests executed is $O(k^2 m^2)$,
eliminating the 
exponential $O(m^k)$ cost 
of full tomography.}

However, inferring the label of every perturbed input 
is often unnecessary to compute a good estimation of $\mathrm{r}_{\boldsymbol{\sigma}}$.
\sys employs an early-stopping strategy to avoid superfluous model inferences 
(see Algorithm~\ref{alg:testifai_full}). 
The idea is to estimate $\mathrm{r}_{\boldsymbol{\sigma}}$ incrementally.
First, \sys partitions the dataset into small
batches of size \(b\) ($\ell$.~\ref{algline:batch}).
It then iteratively computes and stores
a per-batch robustness estimate ($\ell$.~\autoref{algline:hist}).
\sys will assess convergence using a window of the last 
$w$ partial estimates.
Computation 
stops when the variation in the window 
falls below a predefined threshold $\delta$
($\ell$.~\autoref{algline:stop}).
We empirically found that \(b = 100\), \(\delta = 0.005\) and \(w = 3\) 
gives a good, 
unbiased estimate of $\mathrm{r}_{\boldsymbol{\sigma}}$ 
in our experiments.
{The time cost of 
each test comprises 
the cost of transforming 
data samples ($\ell$.~\ref{algline:transform}) and 
the cost of performing 
model inference on them ($\ell$.~\ref{algline:train}). 
Early
stopping reduces both components 
by limiting the number of samples processed.
We discuss the 
computational savings
and
the relative
contributions of transformation 
and inference time in \S\ref{sec:result2}.}

All binary prediction outcomes observed prior to early stopping  
are stored in a set \(\mathcal{T}\) ($\ell$.~\autoref{algline:train}),
which is then used for training our predictive model.

\subsection{Predicting tests}
\label{sec:random-forest}

\sys learns to predict the robustness 
of higher-order tests in \(\Theta_{\geqslant3}\)
based on empirical observations 
from the lower-order tests
\(\Theta_{\leqslant2}\).
Given a perturbation configuration \(\boldsymbol{\sigma}\in\Theta\), 
our predictive model
\(\mathcal{O}\) returns a predicted robustness score \(\hat{\mathrm{r}}_{\boldsymbol{\sigma}} = \mathcal{O}(\boldsymbol{\sigma})\).

\paragraph{Training}
Algorithm~\autoref{alg:testifai_full} returns a 
training data set
\[
\mathcal{T} = \left\{\, \big(\boldsymbol{\sigma},\ \mathbf{1}[\mathcal{M}(\pi_{\boldsymbol{\sigma}}(x)) = y]\big) \;\middle|\; 
\boldsymbol{\sigma} \in \Theta_{\leqslant 2},\; (x, y) \in \mathcal{D}_{\boldsymbol{\sigma}} 
\,\right\},
\]
where \(\mathcal{D}_{\boldsymbol{\sigma}} \subseteq \mathcal{D}\) 
is the subset of the inputs---possibly partial, 
since we employ early 
stopping in Algorithm~\ref{alg:testifai_full}---used 
to estimate 
the robustness of the model 
under perturbation configuration \(\boldsymbol{\sigma}\).
The size of the training set \(\vert\mathcal{T}\vert = \sum_{\boldsymbol{\sigma} \in \Theta_{\leqslant 2}} |\mathcal{D}_{\boldsymbol{\sigma}}|\) is the total number of perturbed inputs evaluated across all configurations.
Each element of \(\mathcal{T}\) pairs a configuration \(\boldsymbol{\sigma}\) 
with a binary outcome indicating whether the model correctly classified a given perturbed input or not. 
We treat each element of $\mathcal{T}$ 
as a training example consisting 
of a perturbation configuration 
and its corresponding binary outcome.
By projecting \(\mathcal{T}\), we construct 
a feature matrix \(X_{\text{\tiny $\mathcal{O}$}} \in \mathbb{R}^{\vert\mathcal{T}\vert \times k}\)
containing all configurations and a label vector 
\(\mathbf{y}_{\text{\tiny $\mathcal{O}$}} \in \{0,1\}^{\vert\mathcal{T}\vert}\) containing all outcomes. These are then used to train our model.

Our model \(\mathcal{O}\) is a {random forest classifier}. After tuning, we selected the following configuration:
\begin{enumerate}
\item The model consists of $100$ trees, 
balancing computational and statistical performance.
\item Bootstrap sampling is disabled, 
allowing each tree to train on the full dataset.
\item There is
no restriction on the number of 
features considered at each split, enabling trees to 
explore the full feature space and capture richer interactions 
among perturbation types.
\item It uses the log-loss splitting criterion, optimizing for splits that reduce the cross-entropy between predicted and true labels. This encourages probability estimates
of robustness that are more reliable and easier to interpret.
\end{enumerate}
This configuration was selected based 
on the lowest mean squared error (MSE) 
observed on a held-out validation set 
of actual \(\Theta_3\) test results. 
We further evaluate 
our model's generalization performance 
on \(\Theta_3\) and \(\Theta_4\) 
in \S\ref{sec:result2}.

An analogous surrogate model can be trained to estimate perturbation validity (e.g., estimate KID~\cite{BinkowskiSAG18} and BERTScore~\cite{zhang2020bertscore} for perturbed images and text, respectively) from low-order observations, enabling users to exclude low-quality regions of \(\Theta\)  (see \S\ref{sec:validity}).

\paragraph{Why random forests?}  
Random forests are non-parametric
ensemble methods that approximate structured 
conditional distributions without explicit 
structure specification---unlike 
Bayesian networks or 
factor graphs. 
{Also, in our preliminary experiments, random forests had the best
sample efficiency among other architectures (gradient-boosted trees,
multilayer perceptrons, 
and Bayesian networks) that achieved 
comparable accuracy. This was an important factor since 
we train \(\mathcal{O}\) on a limited number 
of empirical robustness measurements.
}
Note, however, that the choice of the best architecture for \(\mathcal{O}\) is not a focus of this work.

\section{Evaluation}
\label{sec:eval}

We structure our evaluation around three key research questions:
(\emph{i}) How accurate are \sys’s robustness predictions? (\S\ref{sec:result1});
(\emph{ii}) Is partial tomography an effective strategy 
for approximating the robustness space $\Theta$? (\S\ref{sec:result2}); and
(\emph{iii}) How sample-efficient is \sys in estimating robustness? (\S\ref{sec:result3}).

\subsection{Experimental setup}

We implemented \sys 
in 
Python~3.9, 
using the \textsf{scikit-learn} 
library to train our random forest model, 
and \textsf{sympy} 
to parse and evaluate Boolean query expressions. 
Experiments were 
conducted on a 
high-performance GPU cluster 
at the Massachusetts 
Green High Performance Computing Center (MGHPCC), 
using an NVIDIA Tesla T4 GPU with CUDA 12.3.

\begin{table}[t]
  \centering
  \small
  \renewcommand{\arraystretch}{0.85}
  \setlength{\tabcolsep}{3pt}
  \caption{Summary of the benchmarks and evaluated models.}
  \label{tab:datasets}
  \begin{tabular}{@{}clrp{3.1cm}lc@{}}
    \toprule
    \textbf{Task} & \textbf{Dataset} & \textbf{Size} & \textbf{Perturbations} & \textbf{Model} & \textbf{Acc.} \\
    \midrule
    \mnistSymb & MNIST    & 10,000 & brightness, zoom, motion-blur, shear                & LeNet-5    & 98.4\% \\
    \cifarSymb & CIFAR-10 & 10,000 & shot-noise, brightness, jpeg-compression,  contrast & WRN-28-10 & 94.7\% \\
    \sdcarSymb & Roboflow &  1,000 & translate, scale, contrast, brightness              & YOLOv11    & 82.2\% \\
    \tsignSymb & GTSRB    & 12,630 & darken, codec-error, gaus\-sian-blur, exposure        & CNN-SE     & 97.6\% \\
    \quoraSymb & QQP      &  1,000 & synonym, typos, contraction, punctuation            & RoBERTa    & 91.2\% \\
    \bottomrule
  \end{tabular}
\end{table}

\subsubsection{Benchmarks}
\label{sec:benchmarks}

We evaluate \sys on five benchmarks: 
four
vision and one 
language classification tasks.
For each task, 
we found a 
publicly available 
pre-trained classification
model and its associated data set. 
Table~\ref{tab:datasets} 
summarises our benchmarks: the dataset
name, size and the considered perturbations, 
as well as the model name and its
classification accuracy. 

\emph{Hand-written digit recognition ({\footnotesize \mnistSymb})} 
is a 
classic computer vision classification task.
We test the robustness of the
\textsf{LeNet-5} model~\cite{yuan2022deepboundary}
on perturbed  
grayscale 
images of hand-written digits from
the \textsf{MNIST} test dataset~\cite{gitDislMnistDatasets}.
We assess model robustness to \textsf{brightness}, \textsf{zoom}, \textsf{motion blur} 
and \textsf{shear}---four common digit‐image corruptions~\cite{mu2019mnistc,brendel2018ai, mohsenzadegan2021deep,kochkorova2025data,shorten2019survey} that mimic lighting changes, scale variations, camera motion, and geometric distortions, respectively.
Perturbations were implemented using the MNIST-C library~\cite{google_research_mnistc_corruptions}.

\emph{Image classification ({\footnotesize \cifarSymb})}
is another classic vision task. 
We test the robustness of \textsf{WideRes\-Net-28-10}~\cite{zagoruyko2016wide,croce2021robustbench} on perturbed images of the CIFAR-10 test dataset~\cite{CIFAR10}.
We assess model robustness to \textsf{shot noise}, \textsf{brightness}, \textsf{jpeg compression}, and \textsf{contrast}~\cite{hendrycks2019benchmarking, hendrycks2020augmix, hendrycks2022pixmix,calian2021defending} 
that simulate sensor imperfections, 
illumination changes, 
compression artefacts, 
and visibility variations, respectively. 
Perturbations were implemented using RobustBench~\cite{croce2021robustbench}.

\emph{Object detection in self-driving Cars ({\footnotesize \sdcarSymb})} is part of the Udacity challenge---the task is to detect objects in urban driving images~\cite{Roboflow}. We select $1000$ images to test the robustness of the \textsf{YOLOv11s} model~\cite{pyresearch_yolo11, ultralytics2025} against \textsf{scale}, \textsf{contrast}, \textsf{translation}, and \textsf{brightness}, four common driving scene corruptions~\cite{chen2024end, chandrasekaran2021combinatorial, tian2018deeptest} that mimic distance variations, lighting conditions, camera movements, and illumination changes, respectively.
Perturbations were implemented using DeepTest~\cite{tian2018deeptest}.
We selected five 
(out of the 
ten) 
severity levels, setting
scale $(s_x,s_y)\in \{(1.5,1.5),\allowbreak\, (2.6,2.6),\allowbreak\, (3.7,3.7),\allowbreak\, (4.8,4.8),\allowbreak\, (6.0,6.0)\}$, contrast $\alpha\in \{1.2,\allowbreak\,1.6,\allowbreak\,2.1\allowbreak,\,2.5,\allowbreak\,3.0\}$, translation $(t_x,t_y)\in \{(20,20),\allowbreak\, (40,40)\allowbreak,\, (60,60),\allowbreak\, (80,80),\allowbreak\, (100,100)\}$, and brightness $\beta\in \{20,\allowbreak\,40,\allowbreak\,60,\allowbreak\,80,\allowbreak\,100\}$.

\emph{Traffic sign recognition ({\footnotesize \tsignSymb})} is 
an essential 
task for autonomous driving systems. We test the robustness of the \textsf{CNN-SE} model~\cite{Neonithinar_GTRSB} on the German Traffic Sign Recognition Benchmark (GTSRB)~\cite{stallkamp2011german}. The dataset contains $12630$ test images 
of $43$ signs. We 
test
\textsf{darkening}, \textsf{codec-error}, \textsf{gaussian blur}, and \textsf{exposure}---four perturbations that simulate  
challenges in traffic sign perception: nighttime or shadowed viewing conditions, video transmission artifacts, imperfect camera focus, and overexposed imaging, respectively~\cite{temel2018traffic,yan2023traffic,aldoski2025traffic}.
Perturbations were implemented using CURE-TSD~\cite{olivesgatech_CURETSD}.

\begin{figure*}[t]
    \centering
    \begin{subfigure}[b]{0.196\textwidth}
        \includegraphics[width=\textwidth]{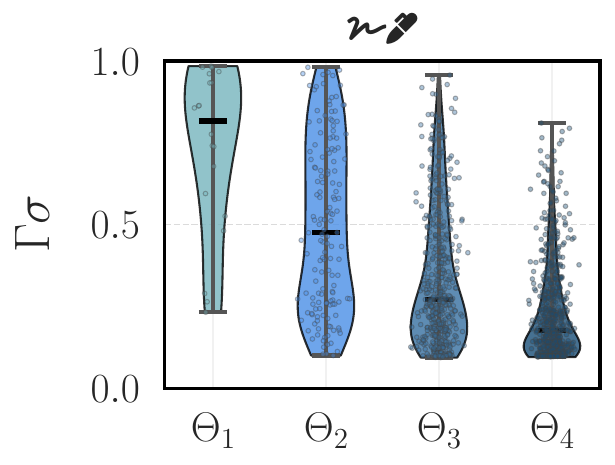}
    \end{subfigure}
    \begin{subfigure}[b]{0.196\textwidth}
        \includegraphics[width=\textwidth]{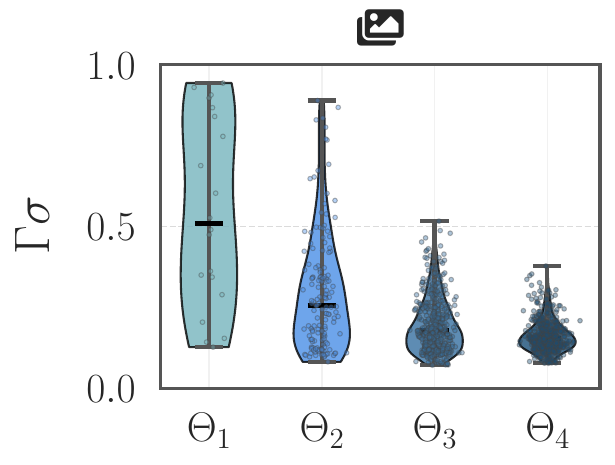}
    \end{subfigure}
    \begin{subfigure}[b]{0.196\textwidth}
        \includegraphics[width=\textwidth]{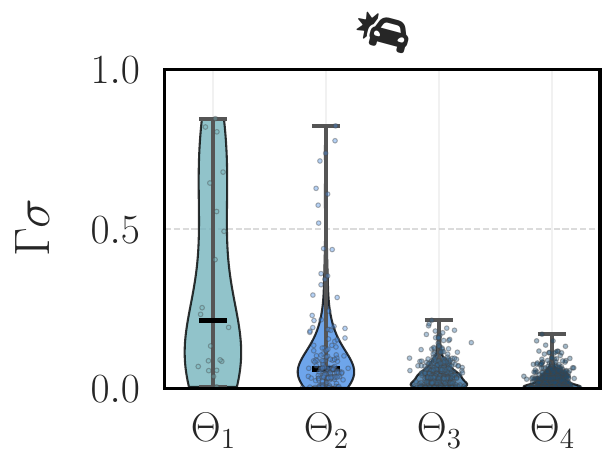}
    \end{subfigure}
    \begin{subfigure}[b]{0.196\textwidth}
        \includegraphics[width=\textwidth,height=2.7cm]{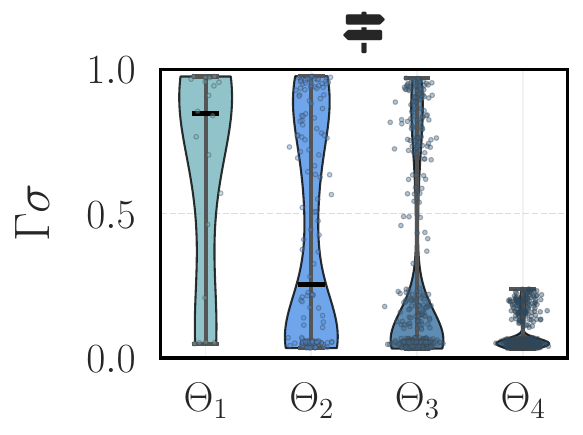}
    \end{subfigure}
    \begin{subfigure}[b]{0.196\textwidth}
        \includegraphics[width=\textwidth]{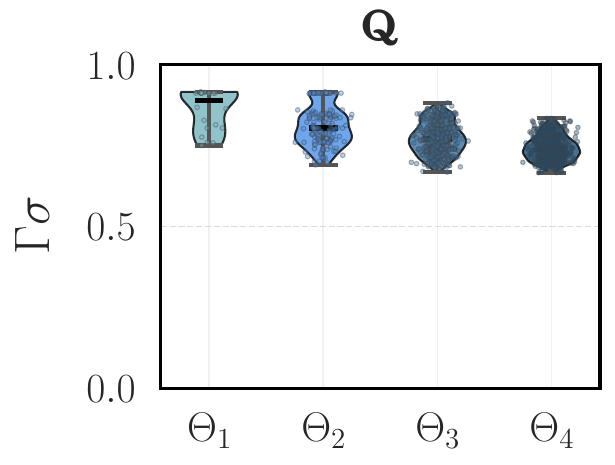}
    \end{subfigure}
    \caption{Distribution of ground-truth 
    {robustness} scores 
    $\mathrm{r}_{{\boldsymbol{\sigma}}}$ 
    across
    $\Theta_1$–$\Theta_4$
    for each benchmark. 
    Higher orders show wider,
    downward-shifted distributions,
    indicating
    increased accuracy degradation and robustness variability.
    }
    \label{fig:dist1234}
\end{figure*}

\begin{figure}[t]
    \centering
    \begin{minipage}[b]{.46\columnwidth}
        \centering
        \includegraphics[width=0.91\columnwidth]{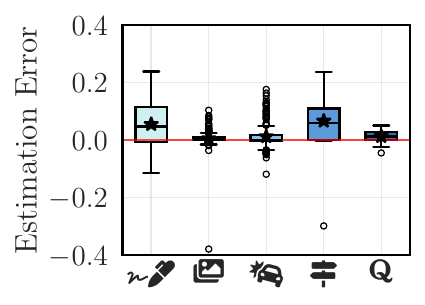}
        \caption{\({\mathrm{r}}_{\boldsymbol{\sigma}}\) estimation errors for 3D tomography.}
         \label{fig:error-third-order-stability}
    \end{minipage}
    \hspace{1em}
    \begin{minipage}[b]{0.46\columnwidth}
        \centering
        \includegraphics[width=1\columnwidth]{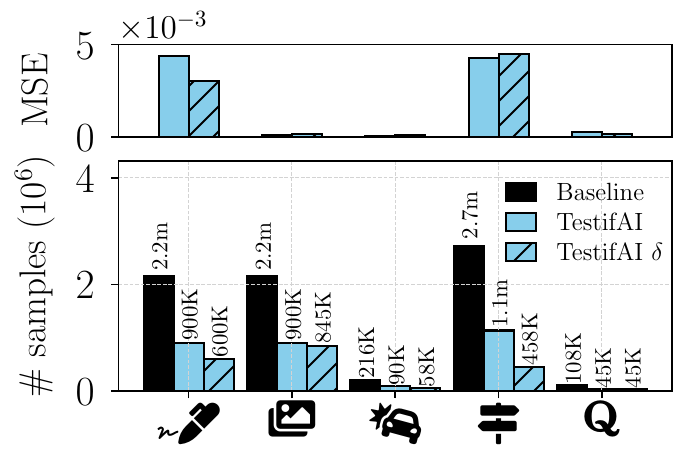}
        \caption{Sample efficiency for 3D tomography.}
        \label{fig:Q13_mse_bar_fullbox_labels}
    \end{minipage}
\end{figure}

\emph{Quora question-answering ({\footnotesize \quoraSymb})} is a semantic similarity task
for natural language understanding~\cite{wang2018glue}. We test the robustness of the $\textsf{RoBERTa}_{\textsf{base}}$~\cite{roberta-base-qqp} model. For our test dataset, we choose $1000$ question pairs from the Quora Question Pairs (QQP) dataset, each pair
having a binary label indicating semantic equivalence. 
We assess model robustness to \textsf{synonym replacement}, \textsf{typos}, \textsf{contractions}, and \textsf{punctuation}~\cite{ribeiro2020beyond,gao2023heros,xia2021using,wang2021textflint}---four perturbations that preserve meaning while introducing lexical, orthographic, stylistic, and structural variations, respectively.
Perturbations were implemented using
TextAttack~\cite{morris2020textattack}.
Each severity level---1 through 5--directly corresponds to the number of edits applied to a sentence: at level 1 we make one edit, level 2 two edits, and so on, up to level 5.

\subsubsection{Ground Truth}
\label{subsec:performance_metrics}

We evaluate 
the accuracy of \sys 
by comparing its predictions 
against full model tomography. 
This baseline exhaustively computes 
the true robustness
score $\mathrm{r}_{\boldsymbol{\sigma}}$ 
or every test $\boldsymbol{\sigma}$ 
in the selected perturbation space. 
In other words, for each benchmark, 
we apply every perturbation configuration 
to every input in the 
evaluation dataset to obtain 
exact 
values.
Figure~\ref{fig:dist1234} 
presents the ground-truth distributions 
of $\mathrm{r}_{\boldsymbol{\sigma}}$.
Partial tomography relies 
on the statistical patterns 
of the \(\Theta_1\) and \(\Theta_2\)
distributions to predict
the \(\Theta_3\) and \(\Theta_4\)
ones.

\begin{figure}[t]
\centering

\begin{minipage}{0.13\textwidth}
  {\centering
    \includegraphics[width=\linewidth]{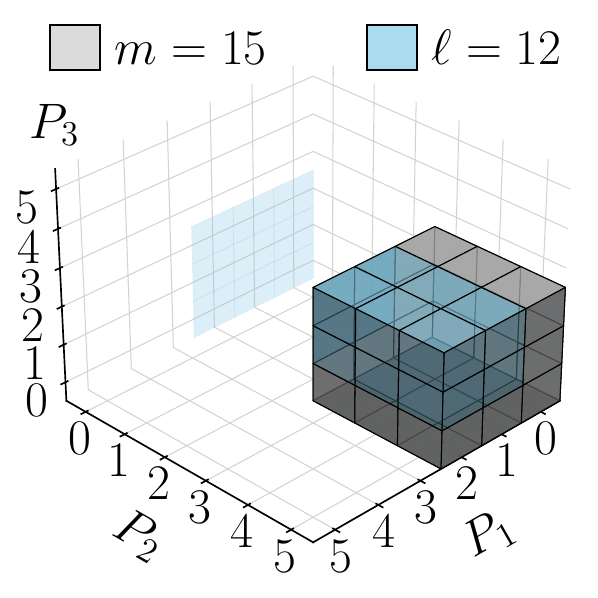}\par
  }
  \vspace{-2mm}
  \makebox[\linewidth][l]{\hspace{-1.5mm}\small $Q_3$}
  \vspace{0.5mm}
  {\centering\scriptsize $P_1 \le 2 \wedge P_2 \ge 3 \wedge P_3 \le 2$\par}
\end{minipage}
\hspace{1.0em}
\begin{minipage}{0.13\textwidth}
  {\centering
    \includegraphics[width=\linewidth]{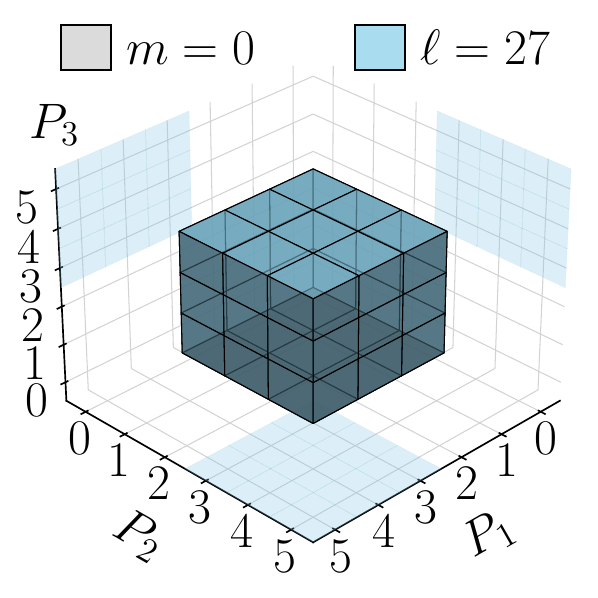}\par
  }
  \vspace{-2mm}
  \makebox[\linewidth][l]{\hspace{-1.5mm}\small $Q_8$}
  \vspace{0.5mm}
  {\centering\scriptsize $P_1 \ge 3 \wedge P_2 \ge 3 \wedge P_3 \ge 3$\par}
\end{minipage}
\hspace{1.0em}
\begin{minipage}{0.13\textwidth}
  {\centering
    \includegraphics[width=\linewidth]{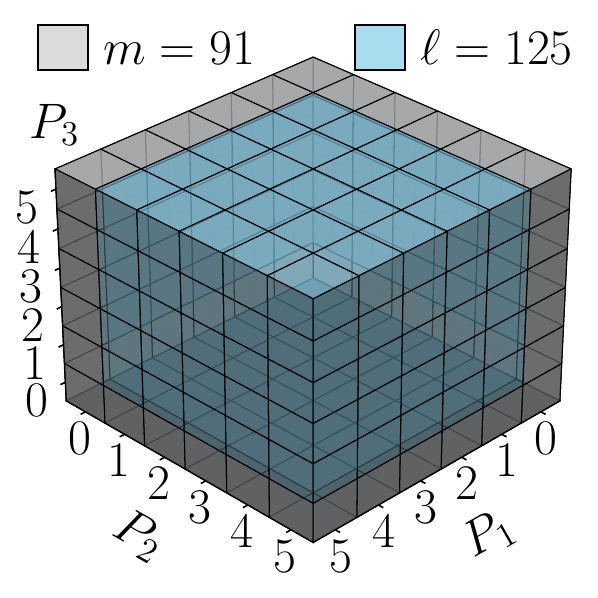}\par
  }
  \vspace{-2mm}
  \makebox[\linewidth][l]{\hspace{-1.5mm}\small $Q_{12}$}
  \vspace{0.5mm}
  {\centering\scriptsize $P_1 \le 5 \wedge P_2 \le 5 \wedge P_3 \le 5$\par}
\end{minipage}

\caption{3D voxel visualisation of $Q_3$, $Q_8$ and $Q_{12}$.}
\label{fig:voxel-grid}
\end{figure}

\begin{table}
    \centering
    \small
    \renewcommand{\arraystretch}{0.85}
    \caption{Summary of the number ($m$) of tests measured for training the partial tomography model and the number ($\ell$) of tests whose robustness was inferred, for each query $Q_i$. }
    \begin{tabular}{c|rrrrrrrrrrrr}
         \toprule
         $i$  &  1 &  2 &  3 &  4 &  5 &  6 &  7 &  8 &  9 & 10 & 11 & 12\\
         \midrule
         $m$  & 19 & 15 & 15 &  9 & 15 &  9 &  9 &  0 &  7 & 37 & 61 & 91\\
         $\ell$  &  8 & 12 & 12 & 18 & 12 & 18 & 18 & 27 &  1 & 27 & 64 & 125\\
         \bottomrule
    \end{tabular}
    \label{tab:queries-stats}
\end{table}

\begin{figure*}[t]
    \centering
        \includegraphics[width=\textwidth]{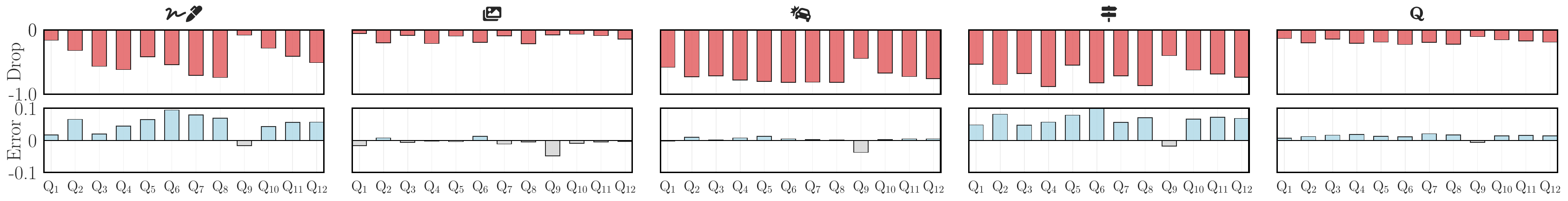}
    \caption{
    Robustness estimation error (error) and 
    predicted drop in $\M$'s accuracy (drop) for 
    aggregate queries $Q_1,\dots,Q_{12}$.  
    For a query $Q$, 
    the error is 
    $\hat{\mathrm{R}}(Q) - \mathrm{R}(Q)$, 
    and the drop 
    is 
    the predicted accuracy of $\M$ minus 
    its
    original accuracy 
    (as per Table~\ref{tab:datasets}).
    }
    \label{fig:query-error-third-order-stability}
\end{figure*}

\subsection{Robustness estimation errors}
\label{sec:result1}

We evaluate the ability 
of our partial tomography 
model---namely, a random forest---to approximate 3D tomography.
For each of our five benchmarks,
we consider the first three perturbations
listed in
Table~\ref{tab:datasets},
each discretised into six severity levels.
This results in
$216$ unique tests per benchmark, 
corresponding to all possible
combinations of three 
perturbations and their severity levels.
We train our random forest
following the setup 
described in \S\S\ref{sec:sample-efficiency} and~\ref{sec:random-forest}.
The model is trained on data
obtained from all 
first- and 
second-order tests (i.e., $\Theta_{\leqslant 2}$),
and is then used to predict robustness scores \(\hat{\mathrm{r}}_{\boldsymbol{\sigma}}\) for all ${\boldsymbol{\sigma}} \in \Theta$, including the 126 
third-order {tests} 
that were not seen during training.

\subsubsection{Per-test errors}

We assess how well 
the model predicts the
robustness score
for each test
by comparing
predicted scores
\(\hat{\mathrm{r}}_{\boldsymbol{\sigma}}\)
with their 
ground truth values 
\({\mathrm{r}}_{\boldsymbol{\sigma}}\).
We define the \emph{robustness estimation error} 
as 
the difference
\(\hat{\mathrm{r}}_{\boldsymbol{\sigma}} - {\mathrm{r}}_{\boldsymbol{\sigma}}\).
A positive error indicates that the model is pessimistic (i.e., it underestimates robustness), while a negative error indicates optimism 
(i.e., overestimation).

Figure~\ref{fig:error-third-order-stability}
shows a box-and-whiskers 
plot of robustness estimation 
errors across all 216 tests (\(\Theta\))
for 3D tomography. 
For each benchmark, the box indicates the median and interquartile range of the errors, while the whiskers
show their full spread.
Across all five benchmarks, most errors cluster tightly around zero, indicating high estimation accuracy. For {\footnotesize \cifarSymb} and {\footnotesize \quoraSymb}, all prediction errors fall within the range $[-0.05, +0.05]$. More broadly, over 90\% of errors across the 216 {tests} lie within $[-0.1, +0.1]$.
Of the 216 {tests}, 90 were used during training, which explains why many exhibit near-zero error---these were directly learned by the model. However, the low error on the remaining 126 held-out tests demonstrates the model’s ability to generalize beyond the training set. For example, the mean prediction error for {\footnotesize \cifarSymb}, {\footnotesize \sdcarSymb}, and {\footnotesize \quoraSymb} is close to zero.

\subsubsection{Aggregate query errors}

We evaluate our model's ability to answer robustness queries
that go beyond individual tests, using a predefined set
of Boolean queries $Q_1$--$Q_{12}$. 
These queries cover the test space \(\Theta\) in complementary ways:
\begin{enumerate*}[label=(\textit{\roman*})]
\item 
$Q_1$--$Q_8$ are \emph{exclusive}---they 
divide  \(\Theta\) 
into mutually non-overlap\-ping
regions. Each exclusive query covers exactly
12.5\% of \(\Theta\) (i.e., $27$ individual tests). 
E.g., $Q_1 = P_1\le2\wedge P_2\le2\wedge P_3\le2$ corresponds to the region with the lowest severity of perturbations, while $Q_8 = P_1\ge3\wedge P_2\ge3\wedge P_3\ge3$ with the highest. 
These queries are designed to 
isolate specific failure modes 
and reveal how robustness varies 
across distinct areas of 
the perturbation space.
\item
$Q_9$--$Q_{12}$ are \emph{inclusive}---each 
covers a progressively larger subset \(\Theta\),
enabling fine-to-coarse analysis.
For example,
$Q_9$ covers only 3.7\% of the space ($8$ tests),
while $Q_{12}$ covers 100\% of it (all 216 tests).
These queries reflect realistic 
scenarios where users may 
wish to assess robustness under broader deployment conditions.
\end{enumerate*}
We visualise the perturbation space covered by some queries in  \autoref{fig:voxel-grid}, and summarise in Table~\ref{tab:queries-stats} the number of measured and of inferred tests for 
$Q_1$--$Q_{12}$.

In \autoref{fig:query-error-third-order-stability}, for each query $Q$, we report the robustness estimation error 
as the difference between the predicted
\(\hat{\mathrm{R}}(Q)\) and the actual \({\mathrm{R}}(Q)\).
By evaluating 
twelve predefined query regions---each 
covering
a distinct portion of the perturbation space---we assess 
whether our random forest trained solely 
on low-order tests 
can accurately estimate robustness 
degradation over increasingly complex subspaces, 
without relying on high-order test data.
\autoref{fig:query-error-third-order-stability} 
shows that 
estimation errors 
remain consistently low across 
all 
queries. $Q_8$ is 
noteworthy 
because it contains only third-order tests, all 
unseen during training;
yet, its maximum error
is just $0.109$, demonstrating
the model's ability to generalize.

\subsection{Effect of partial model tomography}
\label{sec:result2}

\begin{figure*}[t]
    \centering
    \begin{subfigure}[b]{0.195\textwidth}
        \includegraphics[width=\textwidth]{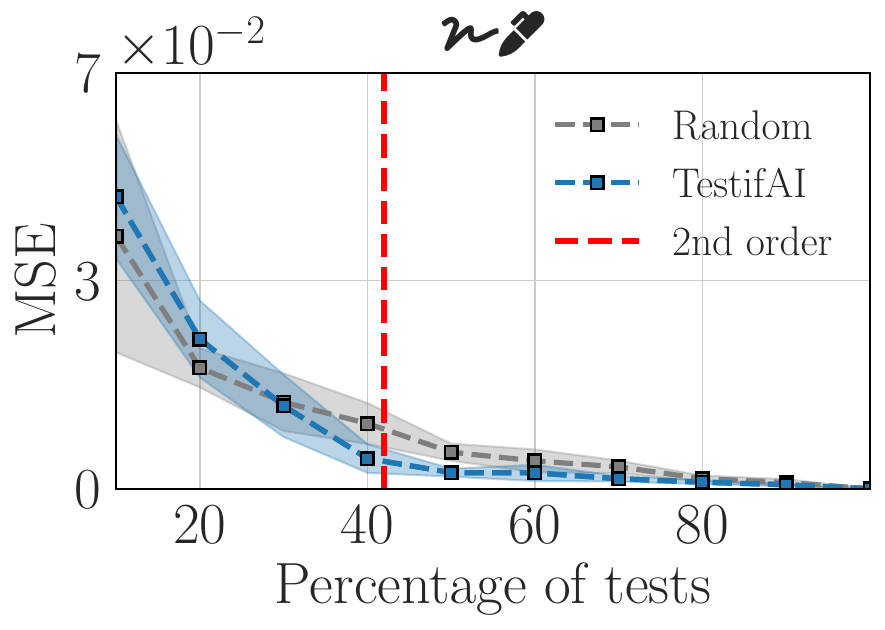}
    \end{subfigure}
    \begin{subfigure}[b]{0.195\textwidth}
        \includegraphics[width=\textwidth]{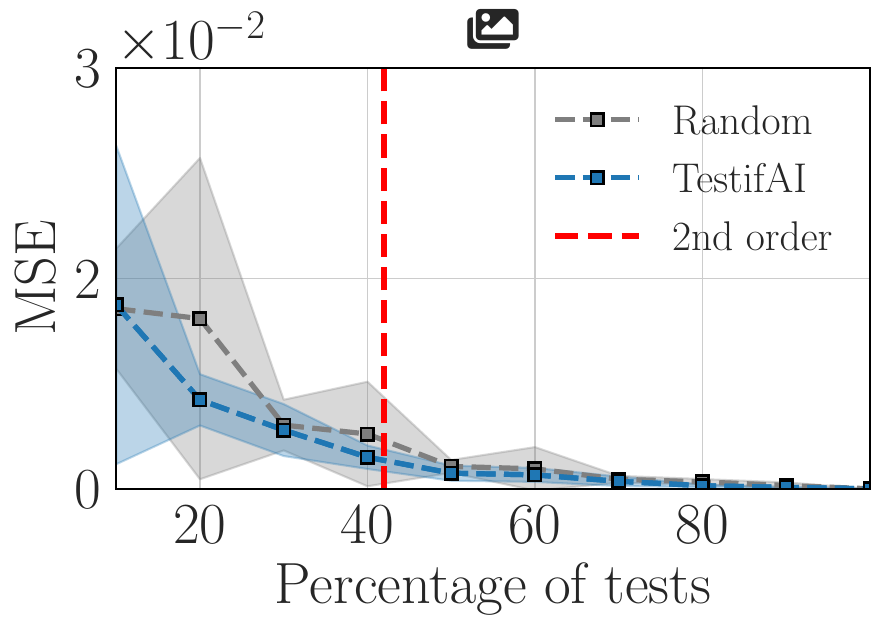}
    \end{subfigure}
    \begin{subfigure}[b]{0.195\textwidth}
        \includegraphics[width=\textwidth]{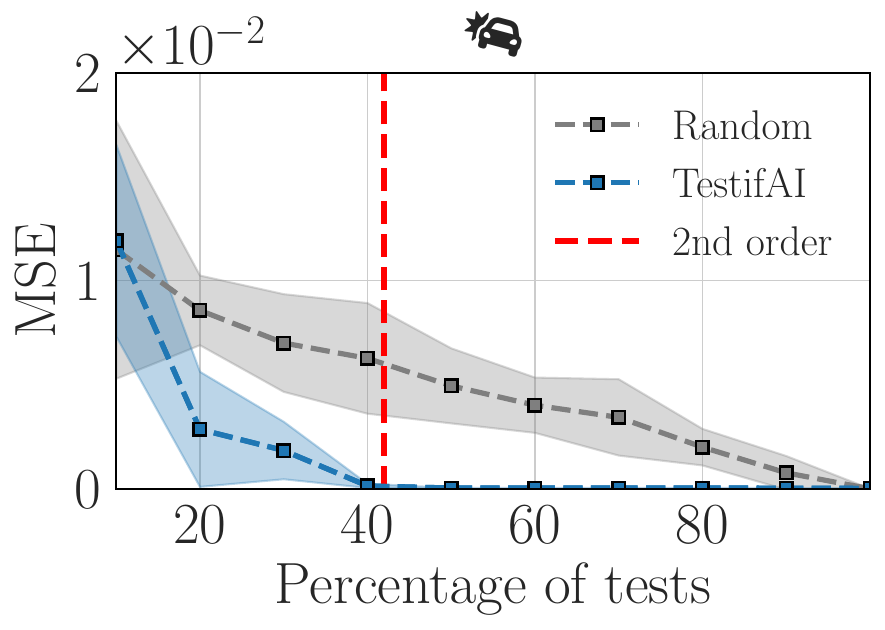}
    \end{subfigure}
    \begin{subfigure}[b]{0.195\textwidth}
        \includegraphics[width=\textwidth]{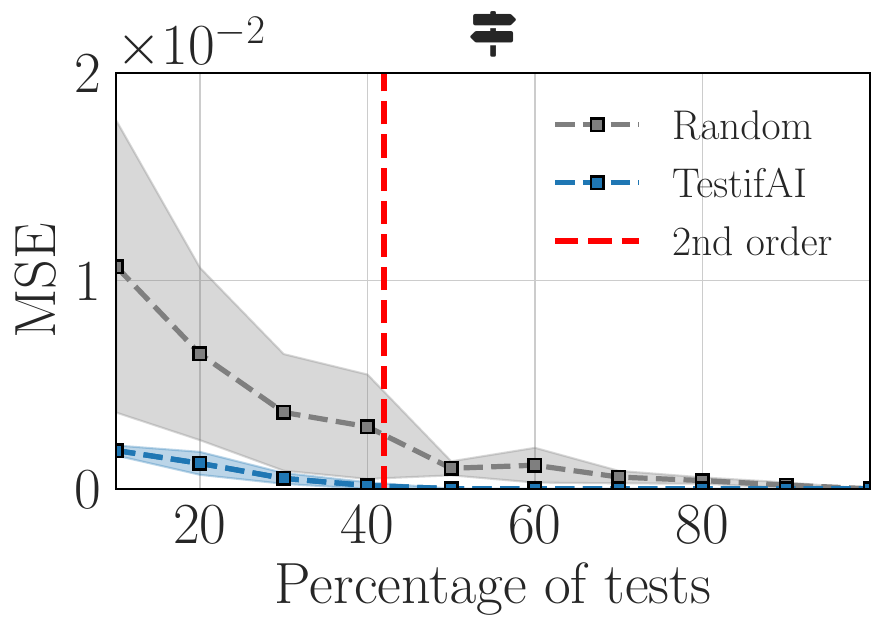}
    \end{subfigure}
    \begin{subfigure}[b]{0.195\textwidth}
        \includegraphics[width=\textwidth]{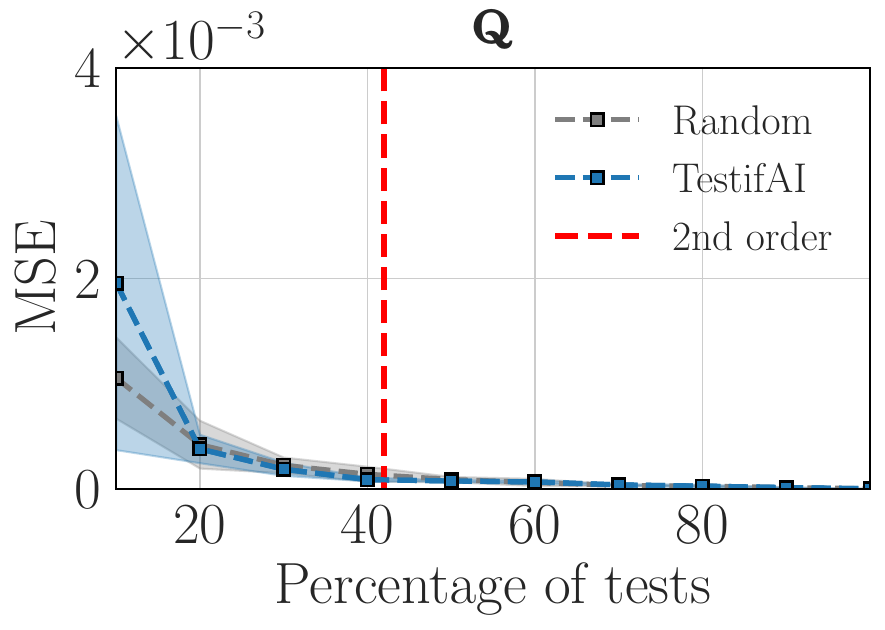}
    \end{subfigure}
    \caption{Efficiency of partial 3D model tomography.}
    \label{fig:3-perturbaitons}
\end{figure*}

\begin{figure*}[t]
    \centering
    \begin{subfigure}[b]{0.195\textwidth}
        \includegraphics[width=\textwidth]{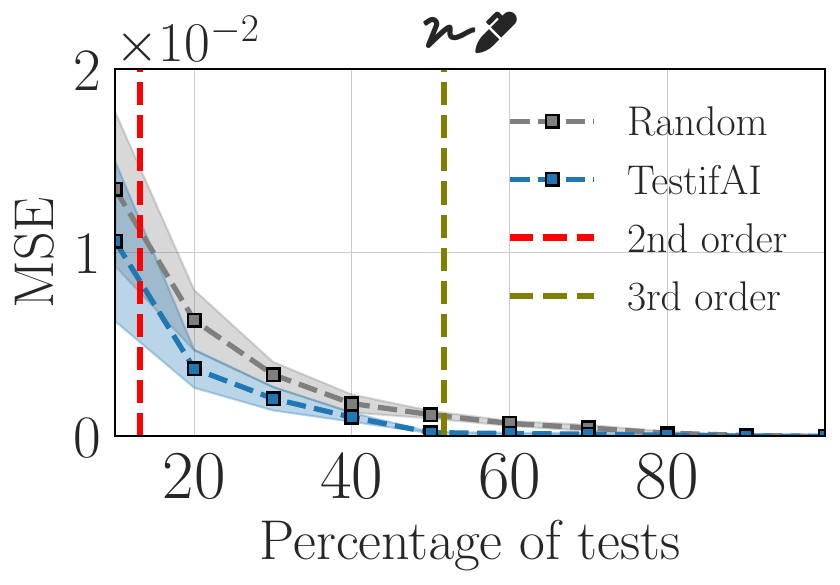}
    \end{subfigure}
    \begin{subfigure}[b]{0.195\textwidth}
        \includegraphics[width=\textwidth]{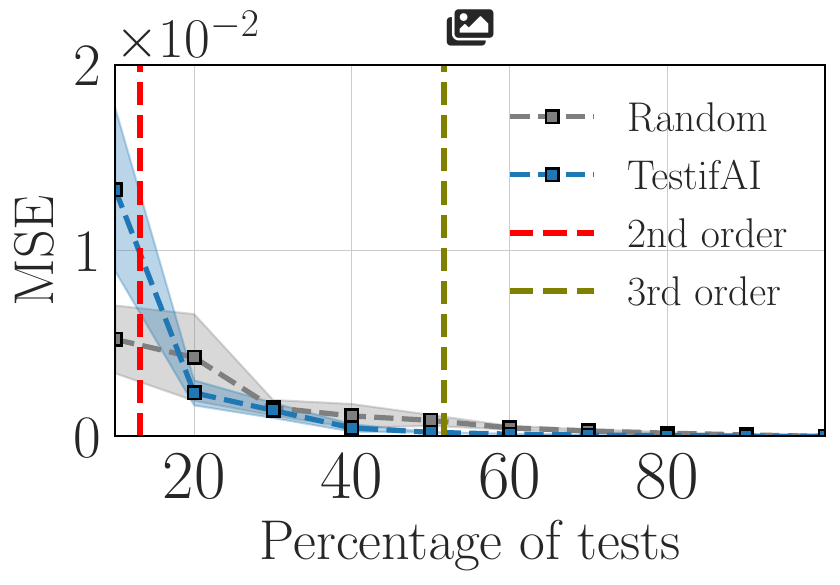}
    \end{subfigure}
    \begin{subfigure}[b]{0.195\textwidth}
        \includegraphics[width=\textwidth]{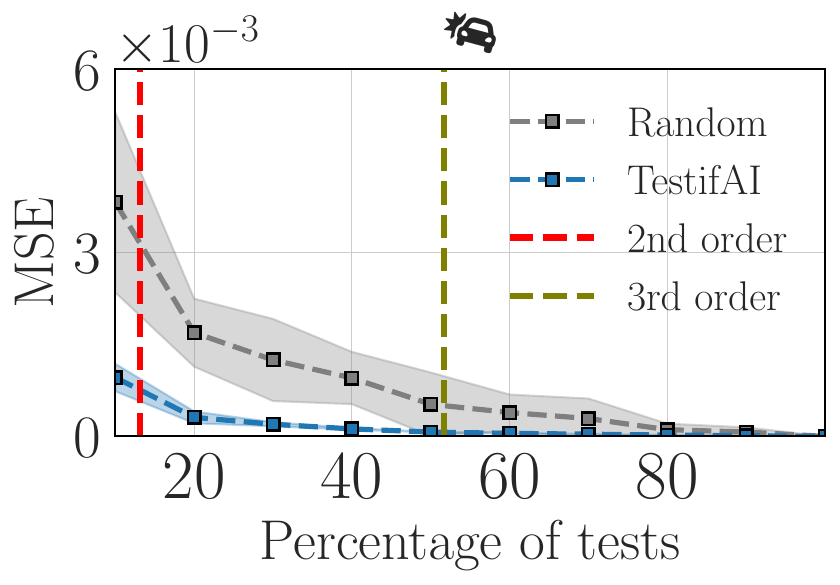}
    \end{subfigure}
    \begin{subfigure}[b]{0.195\textwidth}
         \includegraphics[width=\textwidth]{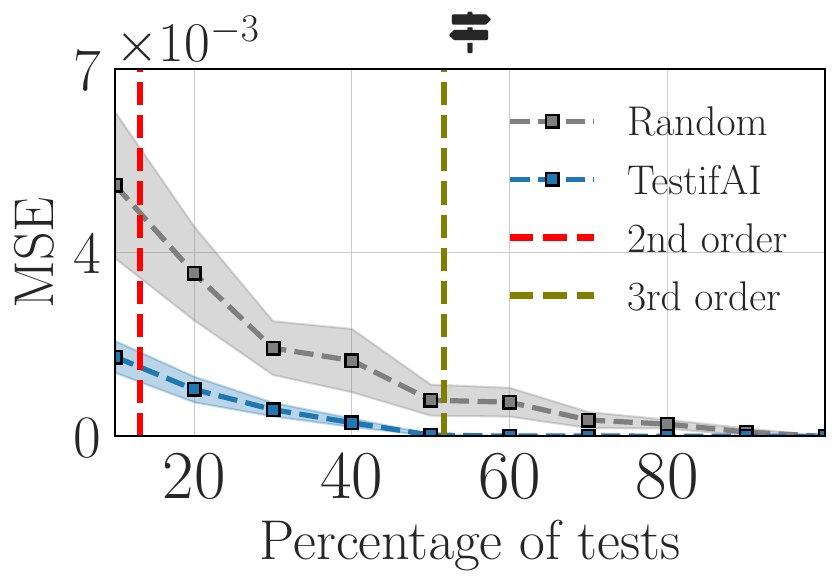}
    \end{subfigure}
    \begin{subfigure}[b]{0.195\textwidth}
         \includegraphics[width=\textwidth]{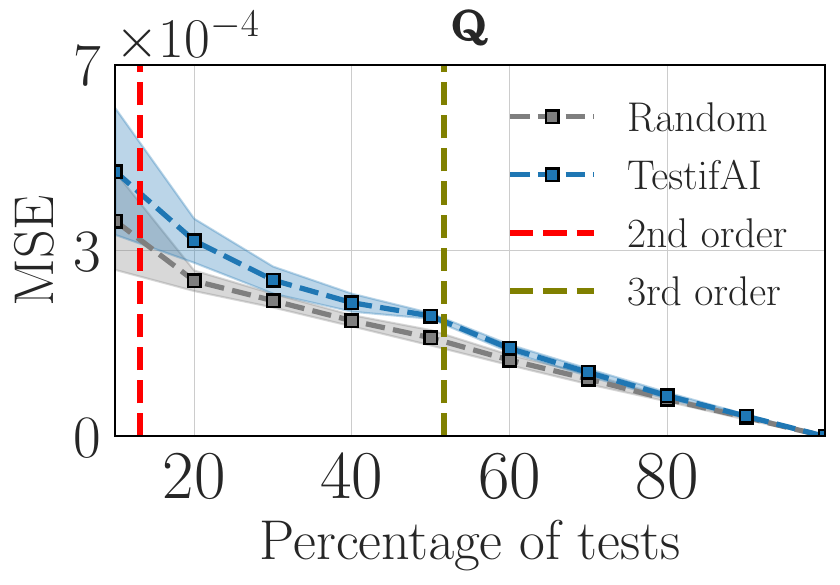}
    \end{subfigure}
    \caption{Efficiency of partial 4D model tomography.}
    \label{fig:seq-vs-random}
    
\end{figure*}

Throughout the paper, 
we 
argue
for 
using 
all {tests} in \(\Theta_{\leqslant2}\)
to 
train a model that estimates robustness 
for 
\(\Theta_{\geqslant3}\).
A natural question 
is
why not sample the same number 
of {tests} uniformly at random 
from the entire perturbation space \(\Theta\).
In this section, 
we compare these two training strategies 
and evaluate their effectiveness 
in 
accurate 
robustness estimation.

We design an experiment parametrised by two factors:
the training set size
and the strategy used
to construct it.
First, we vary the training set size
from 10\% to 100\% of 
available tests
in 10\% increments.
For each percentage level, 
we compute the corresponding 
number 
of tests ($n$) and select that many for training.
Second, we vary how the $n$ tests are selected:
\emph{(i)}  
\emph{ordered sampling} 
selects tests 
by increasing perturbations order 
(e.g., 1-way combinations, 
followed by 2-way, and so on);
and \emph{(ii)}
\emph{random sampling}
selects
tests 
uniformly 
at random from 
\(\Theta\).

For each selected test \(\boldsymbol{\sigma}\), 
we apply its perturbation configuration to every input in the dataset, 
record the model's success or failure, 
and use the results to construct the 
random forest 
training set, as described in Section~\ref{sec:sample-efficiency} and Algorithm~\ref{alg:testifai_full}.
We evaluate the model's predictions on the remaining \mbox{\(\vert \Theta \vert - n\)}
tests, 
reporting the mean squared error (MSE) 
over 
predicted 
robustness scores.
Each experiment is repeated 10 times
to account for sampling and training variability,
and we report the 
mean and standard deviation.

We run this experiment 
in
two settings: 
3D tomography 
with
\(\vert \Theta \vert = 216\) (\autoref{fig:3-perturbaitons}),
and 4D tomography 
with
\(\vert \Theta \vert = 1296\) (\autoref{fig:seq-vs-random}).
Figures~\ref{fig:3-perturbaitons} and~\ref{fig:seq-vs-random}
show MSE
as a function of training set size \(n\) 
under 
{ordered} (blue)
and {random} (gray) 
sampling.
Each point 
is averaged
over 10 runs, 
with shaded bands 
representing one standard deviation.
The red vertical dashed line 
marks
the point at which the ordered selection 
includes all second-order tests (i.e., \(\Theta_{\leqslant 2}\)), 
while the green line 
marks
inclusion of all
third-order tests 
(i.e., \(\Theta_{\leqslant 3}\)).

Across both 3D and 4D
settings,
ordered sampling 
consistently outperforms random sampling, 
achieving lower MSE 
with substantially less training data.
Ordered sampling converges rapidly: 
for most benchmarks, 
robust generalization 
is achieved with only 20–30\% 
of the perturbation space.
In
contrast, random sampling 
exhibits
slower convergence and greater variance, 
especially at small sample sizes. 
Notably,
although 
third-order data further
improves 
MSE, the 4D average MSE achieved 
using only first- and second-order data 
is already comparable with the 3D cases.

The results support our hypothesis 
that lower-order projections carry 
highly 
informative structure about 
the full perturbation space, 
and that partial model tomography 
provides a principled, 
sample-efficient alternative to random test selection.

Comparing 3D and 4D 
results, 
the 4D setting consistently 
achieves lower MSE. 
At the same sampling percentage, 
the 4D model has access 
to significantly 
more training data---for example, 
130 tests in 4D versus 
only 21 in 3D at 10\% sampling. 
This increased data density 
leads to sharper early reductions 
in MSE across all benchmarks. 

When comparing benchmarks 
within each dimensional setting, 
we observe that {\footnotesize \sdcarSymb} 
and {\footnotesize \quoraSymb} consistently yield lower MSE than {\footnotesize \mnistSymb} 
and {\footnotesize \cifarSymb}, 
despite using far fewer perturbed 
inputs to train the predictive model 
($1000$ vs. $10000$ images per test). 
This suggests 
that robustness 
estimation quality 
is driven more 
by the stability and predictability 
of 
model responses
to composite perturbations---some of which 
may have little or no effect---than 
by  
sample count
alone.
Figure~\ref{fig:query-error-third-order-stability} 
supports this argument: 
estimation error remains low 
for {\footnotesize \sdcarSymb}, {\footnotesize \cifarSymb}, 
and {\footnotesize \quoraSymb}, 
across
$Q_1$-$Q_{12}$,
whereas
{\footnotesize \mnistSymb} 
and {\footnotesize \tsignSymb} 
exhibit higher and more 
variable 
errors
even 
though their predictive model was trained with more samples. 
This is because 
the multiplicative effect 
of composite perturbations 
on 
accuracy becomes 
less predictable
as perturbation complexity and severity increase.

These results 
suggest that 
when perturbation effects
degrade 
accuracy 
in a consistent and predictable manner,
partial tomography 
can learn useful patterns 
from relatively few examples.  
This also explains the effectiveness 
of sequential testing for benchmarks with stable responses: 
early tests yield 
predictive signals that 
generalize well, 
even in higher-dimensional settings.

\subsection{Partial tomography efficiency}
\label{sec:result3}

In this experiment, we evaluate 
the execution efficiency of 
partial tomography---with and without our early 
stopping strategy---using the total number
of inferences as a proxy metric.
Since inference cost is stable 
for a given hardware setup, 
this metric provides a reliable estimate of total runtime.
We focus on 3D tomography. 

Full tomography yields, of course,
zero robustness estimation error, 
as it evaluates all tests in \(\Theta\) exhaustively. 
However, this comes at the cost of 
\(\vert \Theta \vert \times \vert \mathcal{D} \vert\) 
total inferences. 
For example, in the case 
of {\footnotesize \mnistSymb} or 
{\footnotesize \cifarSymb}, 
this requires 2,160,000 inferences.
Partial model tomography requires evaluating only up to
\(\vert \Theta_{\leqslant 2} \vert \times \vert \mathcal{D} \vert\) 
inferences, which reduces the total number of inferences 
by at least \(58\%\). 

Figure~\ref{fig:Q13_mse_bar_fullbox_labels} 
presents the total number of inferences 
required to reconstruct \(\Theta\) 
under three configurations: 
(\emph{i})~full 3D tomography; 
(\emph{ii})~partial model tomography without early stopping; 
and 
(\emph{iii})~partial tomography with early stopping.
The $x$-axis shows the benchmarks; and 
the $y$-axis shows the total number of inferences required.
Above each bar, we plot the MSE 
between the predicted and 
actual aggregate robustness scores 
for each of our random forests: one
exhaustively applies \(\Theta_{\leqslant2}\) configurations
to \textbf{all} inputs; and one to a subset.
Even with early stopping enabled, 
the MSE is small.

Without early stopping, partial tomography 
reduces the total number of inferences 
by $58\%$
for all benchmarks.
Our early stopping strategy 
(see Algorithm~\ref{alg:testifai_full}) 
further lowers the inference cost down 
to 72.7\% for {\footnotesize \mnistSymb},
61.6\% for {\footnotesize \cifarSymb},
73.15\% for {\footnotesize \sdcarSymb} and
83\% for {\footnotesize \tsignSymb}
(no additional improvements were observed for {\footnotesize \quoraSymb}).
This has a direct effect to the total computation time:
computing the full 3D tomography for {\quoraSymb} required approximately 47.3 hours, which was reduced significantly 
with partial model tomography. 
{The transformation-to-inference-time ratios are
$0.09$ for {\footnotesize \cifarSymb},
$0.20$ for {\footnotesize \sdcarSymb},
$0.30$ for {\footnotesize \quoraSymb},
$2.3$ for {\footnotesize \tsignSymb}, and
$76.7$ for {\footnotesize \mnistSymb}.
Except for {\footnotesize \tsignSymb} and {\footnotesize \mnistSymb},
inference time dominates.
Because perturbations are model-agnostic preprocessing
while inference scales with model size,
we expect inference time to dominate even more for larger models.}

\begin{figure*}[t]
    \centering
    \begin{subfigure}[b]{0.196\textwidth}
        \includegraphics[width=\textwidth]{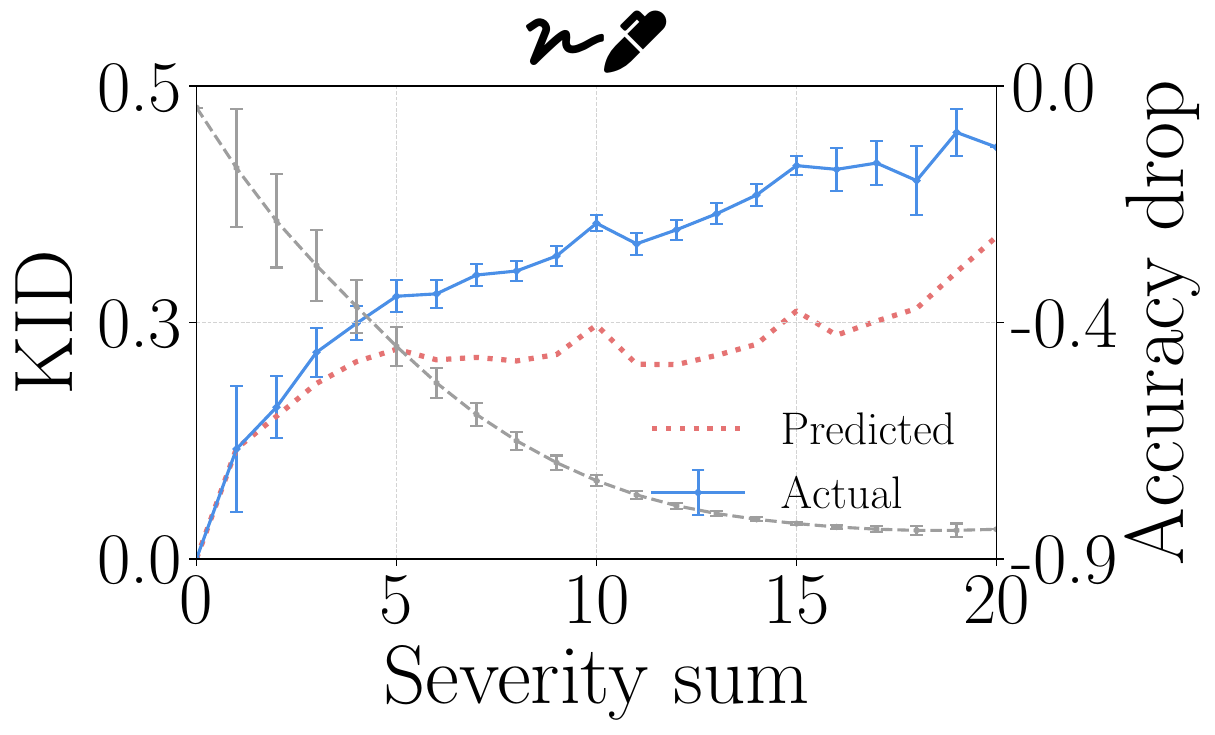}
    \end{subfigure}
    \begin{subfigure}[b]{0.196\textwidth}
        \includegraphics[width=\textwidth]{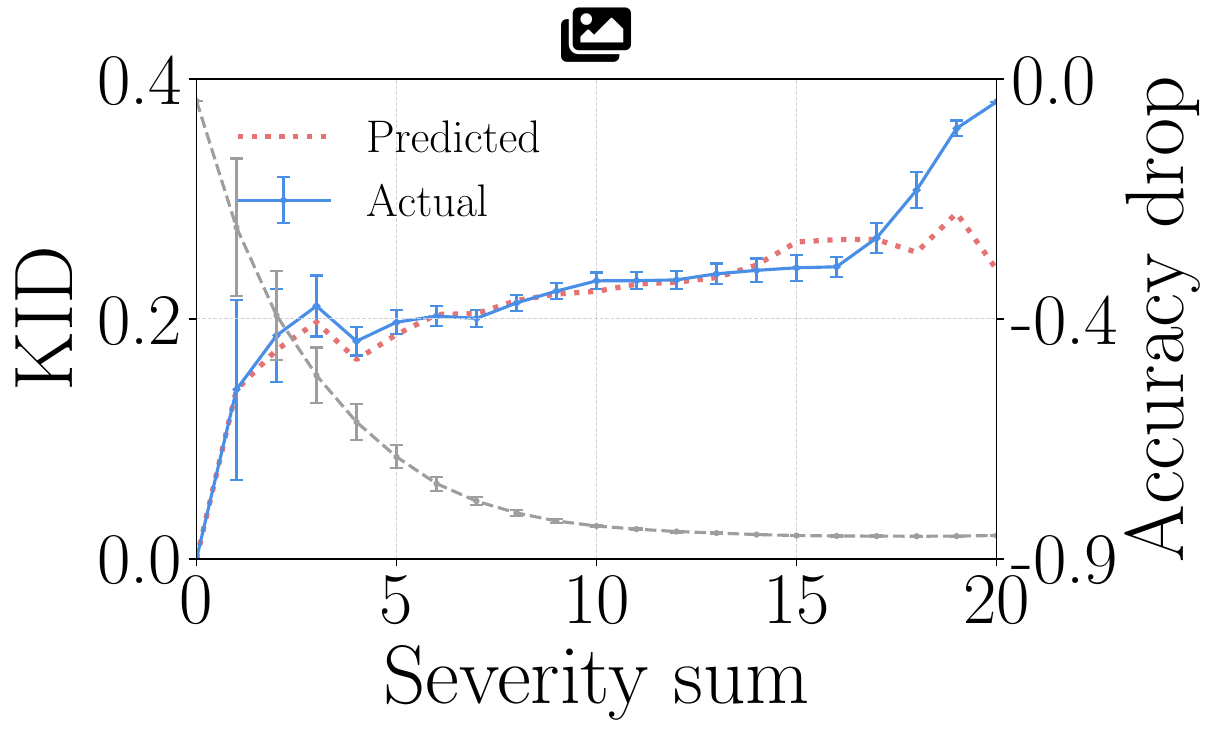}
    \end{subfigure}
    \begin{subfigure}[b]{0.196\textwidth}
        \includegraphics[width=\textwidth]{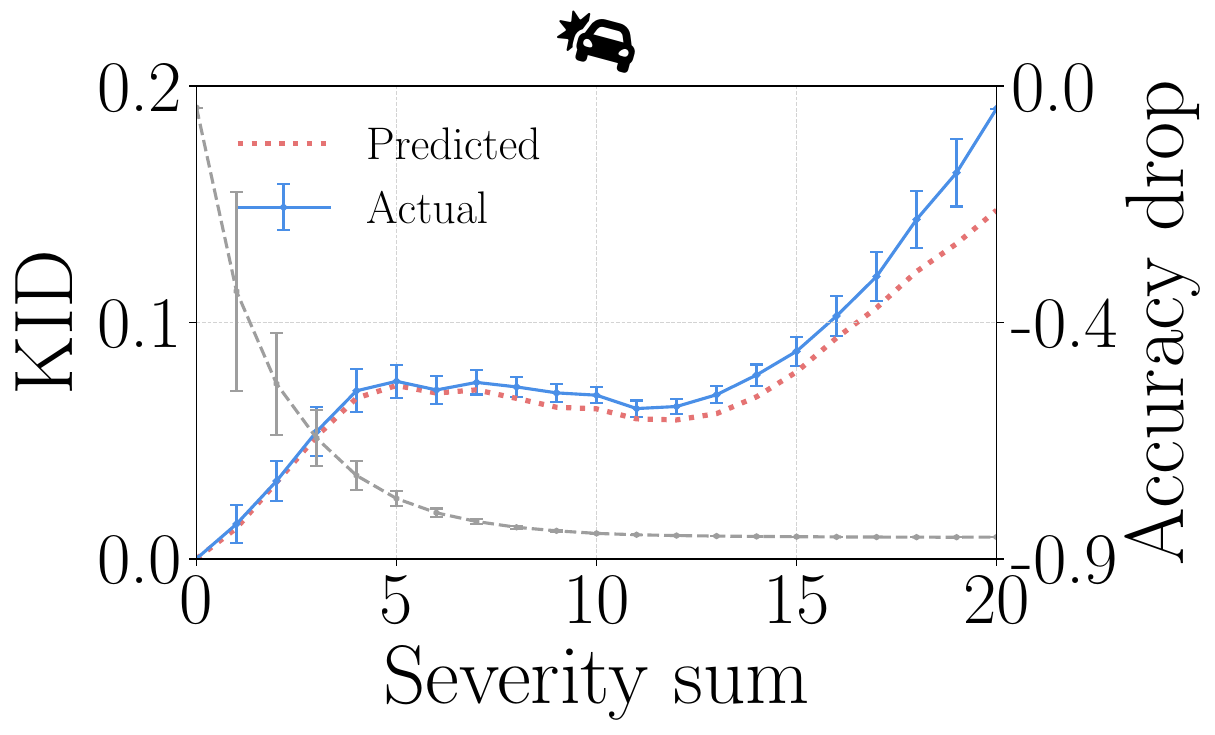}
    \end{subfigure}
    \begin{subfigure}[b]{0.196\textwidth}
        \includegraphics[width=\textwidth]{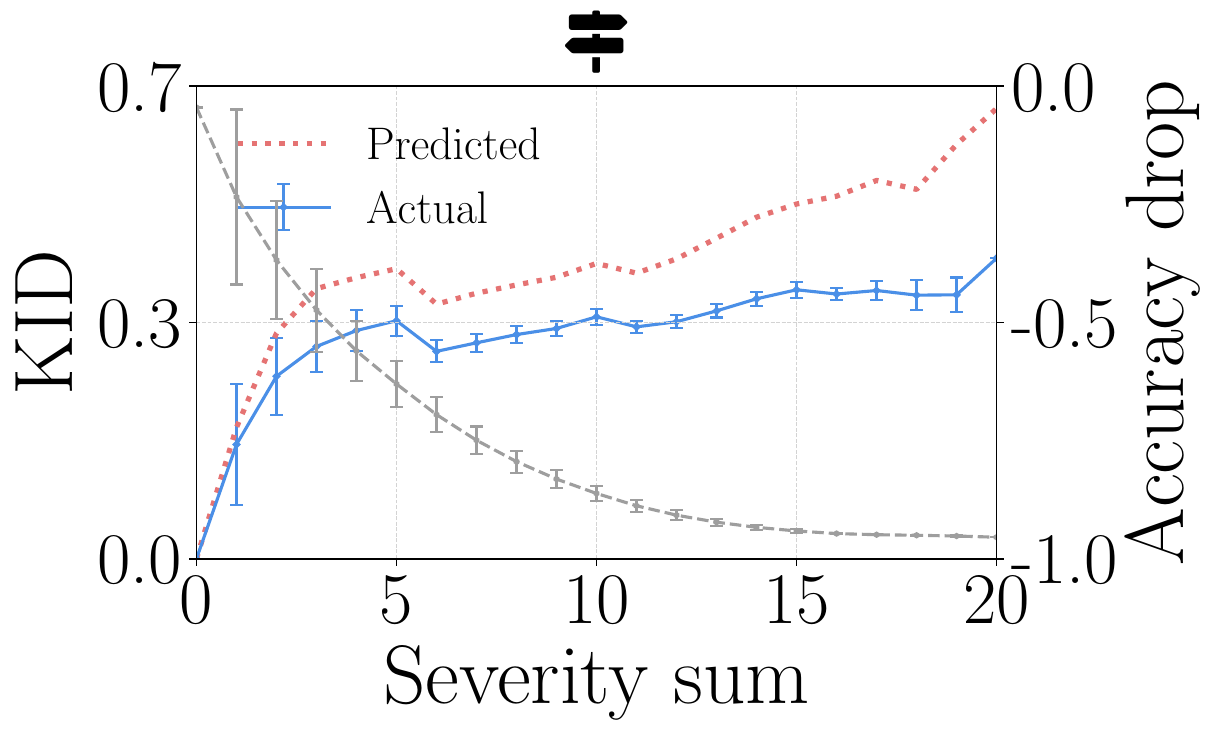}
    \end{subfigure}
    \begin{subfigure}[b]{0.196\textwidth}
        \includegraphics[width=\textwidth]{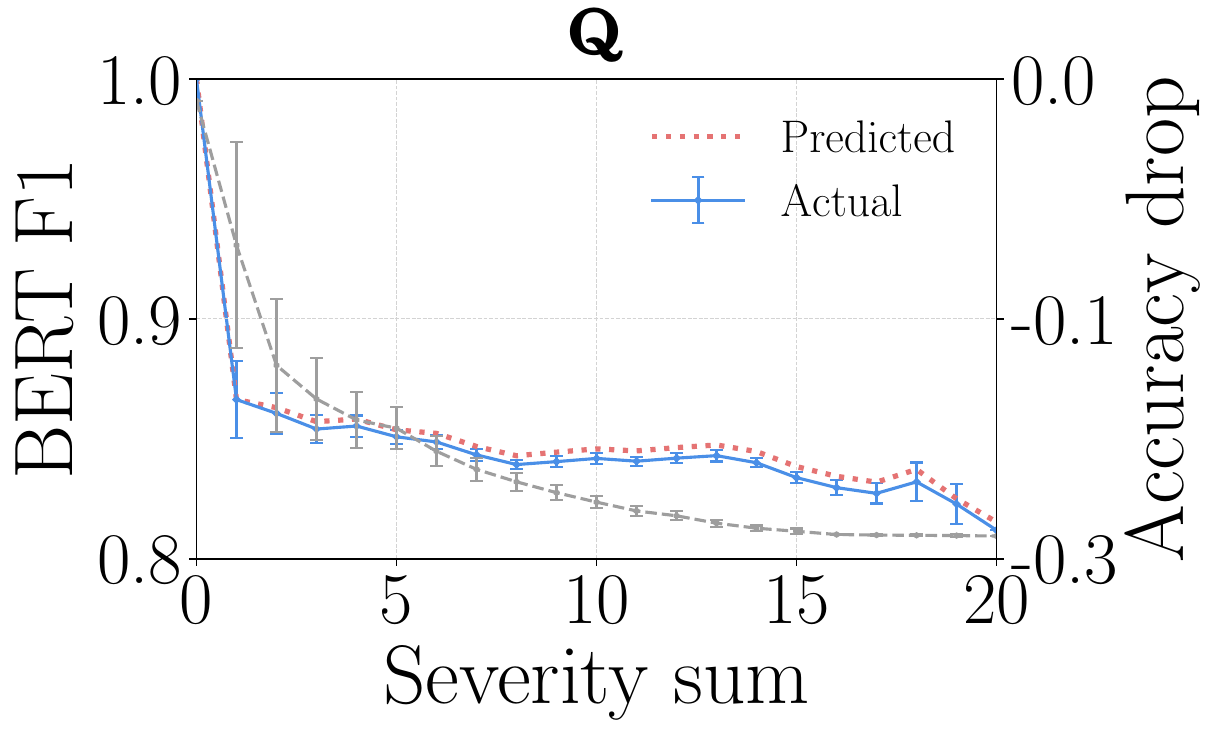}
    \end{subfigure}
    \caption{ Actual vs. predicted quality and accuracy drop under increasing perturbation severity. Blue solid lines show actual perceptual quality (KID for vision, BERT-F1 for NLP). Red dotted lines show predicted quality. Grey dashed lines (right axis) show accuracy drop. Error bars represent standard error across all combinations with the same severity sum.}
    \label{fig:quality-accuracy}
\end{figure*}

\begin{figure}[t]
    \centering
    \includegraphics[width=0.5\columnwidth]{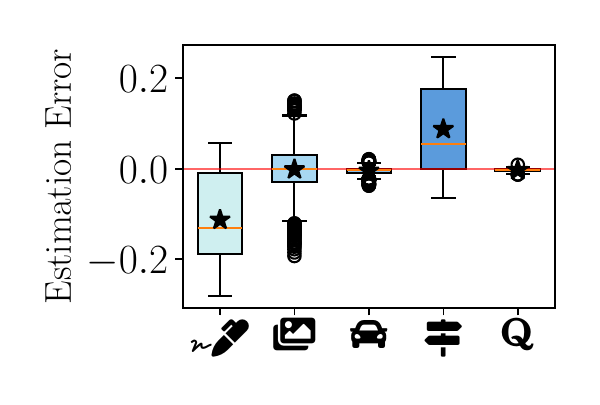}
    \caption{$\mathrm{q}_{\boldsymbol{\sigma}}$ estimation error for 4D tomography.}
    \label{fig:error-quality}
\end{figure}

\begin{figure}[t]
   
    \centering
    \includegraphics[width=0.5\columnwidth]{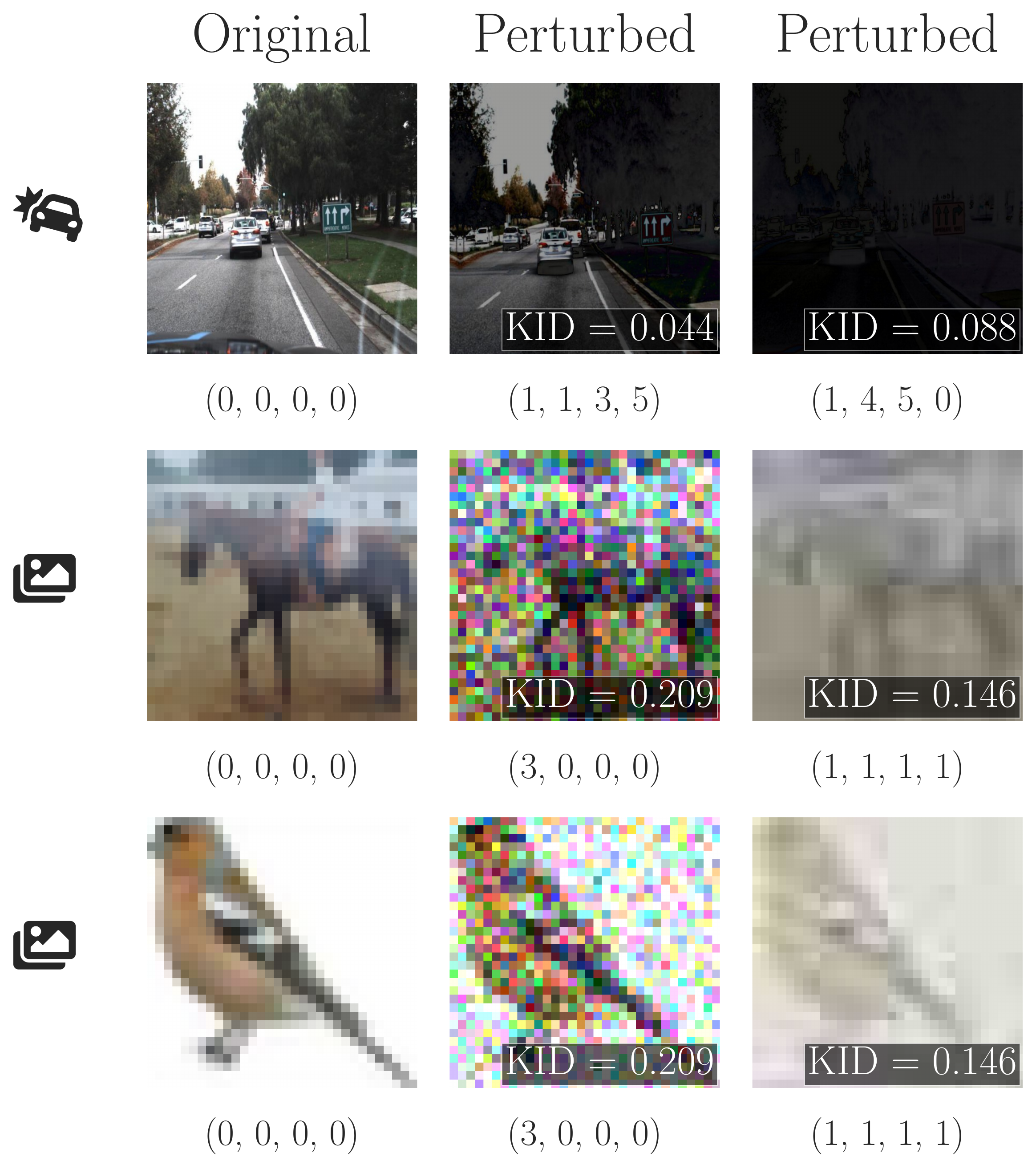}
    \caption{Example perturbed images derived from different perturbation combinations.}
    \label{fig:example-images}
\end{figure}

\section{On Validity \& Quality Estimation}
\label{sec:validity}

\sys assumes that 
domain experts 
specify 
perturbations and severity levels 
that reflect their deployment environment, 
following the practice 
of existing 
robustness benchmarks~\cite{mu2019mnistc,hendrycks2019cifar10c,olivesgatech_CURETSD}. 
As such, \sys does not 
attempt to 
assess the semantic validity of 
higher-order perturbations, or, more generally,
of the perturbation space \(\Theta\)
within 
a model's application domain.

Nevertheless, 
it is informative 
to distinguish 
robustness failures
under \emph{plausible} 
perturbations 
from
those caused by \emph{invalid} 
or highly unrealistic 
ones.
To this end, we 
use
quality metrics 
as proxies for perturbation validity.
Specifically, 
we can use 
the \sys framework to estimate 
the effect of higher-order perturbations on 
input quality,
and then refine testing by augmenting user queries with constraints 
on these metrics. 
We next present 
a quantitative validity analysis of our 
five benchmarks and 
show
how \sys can estimate 
higher-order validity 
from low-order observations 
in the same way it estimates robustness.

For our four vision tasks, 
we use 
Kernel Inception Distance (KID)~\cite{BinkowskiSAG18}
to measure image realism, 
and for the language task, we use BERTScore~\cite{zhang2020bertscore}
to measure 
semantic preservation.
These metrics quantify 
deviation 
from the distribution of
unperturbed inputs
and serve as proxies for perturbation validity. 
Although
we also 
evaluated
other 
quality metrics---including FID and SSIM 
for images and BLEU for text---we found 
that KID and BERTScore integrate best 
with partial tomography and early stopping; 
in particular, KID remains reliable 
under partial sampling of the input distribution. 

We collected the true quality 
score~$\mathrm{q}_{\boldsymbol{\sigma}}$ 
for every test~$\boldsymbol{\sigma}$ 
in the full 4D perturbation space 
(1296 in total).
We then applied partial tomography to estimate 
higher-order
quality scores 
from lower-order ones, 
leveraging the fact that
partial tomography 
is
\emph{agnostic} to the nature 
of the predicted score.
We trained a random-forest \emph{regressor} 
on \(\Theta_{\leqslant 2}\),
replacing
robustness scores $\mathrm{r}_{\boldsymbol{\sigma}}$  
with 
$\mathrm{q}_{\boldsymbol{\sigma}}$.
We used the same feature representation 
as for robustness 
prediction. 

Figure~\ref{fig:quality-accuracy} 
shows 
quality 
(\({\mathrm{q}}_{\boldsymbol{\sigma}}\)), 
predicted quality 
(\(\hat{\mathrm{q}}_{\boldsymbol{\sigma}}\)), 
and accuracy 
drop as functions of the 
\emph{severity sum}
$\sigma_1 + \dots + \sigma_k$ for $\boldsymbol{\sigma}=(\sigma_1,\dots,\sigma_k)$---that is,
the 
$\ell_{1}$-norm of the 
perturbation severity vector
(a slice of the 4D 
perturbation space 
indexed by total severity). 
A given severity sum 
can arise from either 
a 
few 
large perturbations 
or 
the compounding effects 
of many small ones.
\autoref{fig:error-quality} shows that 
across 
benchmarks 
the mean absolute estimation error 
is approximately~$0.1$ 
for {\footnotesize \mnistSymb} 
and {\footnotesize \tsignSymb}, 
and near zero for 
{\footnotesize \cifarSymb}, 
{\footnotesize \sdcarSymb}, 
and {\footnotesize \quoraSymb}, 
indicating that validity, like robustness, 
is 
structured and predictable 
from low-order observations.

A user concerned with application-level 
validity can integrate these scores 
directly into the \sys querying mechanism: 
given a threshold $\tau_{\mathrm{q}}$ on predicted 
input quality, \sys can exclude all tests 
with predicted (or measured) quality 
above~$\tau_{\mathrm{q}}$ 
(or below, in the case of BERT~F1). 
This enables queries of the form
\(
Q \;\land\; 
\hat{\mathrm{q}}_{\boldsymbol{\sigma}} \le \tau_{\mathrm{q}},
\)
where~$Q$ is any 
existing Boolean constraint 
(\S\ref{sec:user-interactions}). 
The threshold further allows users to 
distinguish errors 
under plausible conditions 
($\hat{\mathrm{q}}_{\boldsymbol{\sigma}} \le \tau_{\mathrm{q}}$) 
from those attributable to 
invalid inputs 
($\hat{\mathrm{q}}_{\boldsymbol{\sigma}} > \tau_{\mathrm{q}}$).

For example, 
the top row of 
\autoref{fig:example-images} 
shows two perturbed versions of 
the same image from {\footnotesize \sdcarSymb} 
at severity sum~$10$. 
The fourth-order
perturbation $(1, 1, 3, 5)$
with KID~$0.043$ 
is likely considered 
valid input, 
whereas the third-order
one, 
with KID~$0.088$, 
is likely considered invalid. 
By inspection, 
KID scores up to roughly~$0.06$ 
appear generally valid 
for {\footnotesize \sdcarSymb}. 
Comparing this threshold with 
\autoref{fig:quality-accuracy} 
indicates that similarly low 
KID scores can still occur 
even at severity sums as high as~$14$ 
for {\footnotesize \sdcarSymb}.

Examples 
from {\footnotesize \cifarSymb} 
in the bottom two rows of 
Fig.~\ref{fig:example-images} 
show that equal KID scores 
can correspond to both plausible 
(identifiable) and invalid 
(unidentifiable) inputs.
They also show that the compounding 
effects of different perturbations 
may differ from applying a single 
perturbation at higher severity, 
leading to cases where images with 
larger KID scores are more identifiable 
than those with lower scores 
near the boundary of validity.
The (im)plausibility of an 
input perturbation, therefore, 
depends on the 
data set, the chosen perturbations and 
their strengths, and the application 
domain.

These examples illustrate how users can refine \sys 
using
validity or other system-level metrics. 
Choosing 
appropriate thresholds
requires 
domain expertise, 
ranging from manual inspection 
to deployment-specific analysis. 
Importantly, 
high severity sums do not necessarily 
imply invalid inputs: 
multiple perturbations often 
co-occur in practice (e.g., combined weather and 
sensor effects in vision~\cite{mușat2021multi}, 
or surface-form perturbations in text~\cite{wang2023recode,hao2024your}).
Accordingly, \sys does not discard severe cases 
\emph{a priori}; instead, it surfaces them  
with 
robustness and validity estimates, 
enabling
users to judge whether 
they fall 
within 
acceptable operational bounds.

\section{Discussion}

\sys assumes a user-defined 
space~\(\Theta\) of \emph{semantic} perturbations, 
each with a small discrete 
set of severity levels, making it 
queryable 
for partial tomography.
Even when perturbations have 
continuous severity parameters 
or produce similar effects 
(e.g., fog and contrast), 
coarse binning 
often suffices
to capture robustness trends. 
However, if no semantically meaningful 
discretization exists 
(e.g., gradient-based adversarial attacks~\cite{wan2023average,guo2021gradient}), 
\sys does not apply; 
it is best used as 
an exploratory tool for refining a given 
perturbation family 
(e.g., weather corruptions in ImageNet-C~\cite{hendrycks2019benchmarking} 
or paraphrasing in TextFlint~\cite{wang2021textflint}) 
rather than discovering them.

\sys assumes 
that regions 
\(\Theta_{\geqslant 3}\) are 
sufficiently structured 
that robustness measurements 
in \(\Theta_{\leqslant 2}\) 
capture most interactions. 
When models exhibit 
higher-order effects 
that are not predictable 
from single or pairwise 
tests, 
\sys 
may
misestimate 
robustness. 
Figure~\ref{fig:seq-vs-random} shows that 
adding a modest number of 
\(\Theta_{3}\) oracle evaluations 
improves prediction. 
When practitioners suspect 
higher-order coupling, 
allocating a small 
\(\Theta_{3}\) budget can reduce 
misestimation. 
Automatically identifying 
such cases is future work.

\sys enables 
multi-perturbation analysis 
across diverse model types 
and modalities. 
Partial tomography requires 
a discretizable perturbation 
space and 
a system-level metamorphic test 
to access 
whether behaviour 
under perturbation 
remains acceptable. 
These 
requirements
extend beyond classification tasks.
E.g., \S\ref{sec:validity}
shows
how partial tomography 
can apply to regression and generation: 
KID and BERTScore 
serve as quality-based oracle signals 
for perturbed images and text; 
and similar principles 
enable testing of
code-generation models~\cite{wang2023recode} 
using CodeBLEU~\cite{ren2020codebleu}.

\section{Related Work}

Prior work on testing deep learning models spans several directions,
including 
training input generation (e.g., \textsc{AugMix}~\cite{hendrycks2020augmix})
coverage-guided testing (e.g., DeepXplore~\cite{pei2017deepxplore}),
test prioritisation (e.g., DeepGini~\cite{feng2020deepgini}), and
combinatorial testing (e.g., CIT4DNN~\cite{dola2024cit4dnn}).
While these methods provide 
valuable insights into model behaviour,
they 
neither
support systematic 
reasoning over structured multi-perturbation spaces,
nor 
enable 
users to query robustness 
under diverse, interacting perturbations.

\emph{Benchmarks.} 
MNIST-C~\cite{mu2019mnistc},
CIFAR-10-C and ImageNet-C~\cite{hendrycks2019benchmarking},
KITTI-C, nuScenes-C, and Waymo-C~\cite{dong2023benchmarking}, 
and CURE-TSR~\cite{cure-tsr}
are benchmarks that 
define curated sets of meaningful perturbations 
to test the robustness of computer vision models. 
Similarly, \textsc{CheckList}~\cite{ribeiro2020beyond} 
and \textsf{TextFlint}~\cite{wang2021textflint} 
provide linguistic perturbations 
for testing natural language models.
These are complementary resources to \sys. Users 
can draw perturbations from them 
to systematically explore 
and test their combinations.

\emph{Data augmentation methods.} 
CutMix~\cite{yun2019cutmix}, 
AugMix~\cite{hendrycks2020augmix}, 
and PixMix~\cite{hendrycks2022pixmix}) aim 
to \emph{improve} 
the robustness of deep learning models
applying stochastic combinations of perturbations
during training.
These augmentations expose 
models to inputs 
affected by multiple, 
simultaneously 
applied transformations (e.g., blur, contrast, rotation).
\sys rather aims to systematically 
\emph{explore} 
and \emph{estimate} 
the aggregate robustness of a trained model
under structured, multi-perturbation 
test environments.

\emph{Neuron coverage.} 
DeepXplore~\cite{pei2017deepxplore}, DeepGuage~\cite{ma2018deepgauge}, DLFuzz~\cite{dlfuzz} and DeepHunter~\cite{xie2019deephunter}
aim to uncover erroneous behaviours in deep learning models 
by generating test inputs that 
maximise neuron activation coverage. 
However, increased coverage does not necessarily
correlate with a higher rate of error discovery,
and often leads to the generation of less natural or semantically meaningful inputs~\cite{riccio2023invalid,harel2019neuron}.
These methods typically produce new inputs
by applying pixel-level perturbations to a small set of seed examples,
which limits both the diversity and semantic fidelity of the resulting test data~\cite{dola2024cit4dnn}.
In contrast, \sys generates test inputs
by systematically combining interpretable 
semantic perturbations.

\emph{Test prioritization.}
Input prioritization techniques aim to identify test inputs
that are more likely to reveal model errors,
thereby accelerating the discovery of misclassifications~\cite{guo2025coverage,weiss2022simple}.
DeepGini~\cite{feng2020deepgini}, for instance, prioritizes inputs
on which the model exhibits low confidence (i.e., high uncertainty in softmax output).
In contrast, \sys prioritises system-level metamorphic tests—structured combinations
of semantic perturbations—rather than individual inputs.
This enables efficient estimation of robustness across the entire perturbation space.
Input-level prioritization is orthogonal to our work
and could complement \sys's early stopping strategy,
particularly when users seek to identify worst-case robustness.

\emph{Combinatorial Interaction Testing (CIT).}
CIT has been applied 
to deep learning in several ways, 
differing along two axes: 
model access---whether
\emph{white-box}~\cite{ma2019deepct,chen2019variable} 
or \emph{black-box}~\cite{dola2024cit4dnn,chandrasekaran2021combinatorial}---and 
the type of interaction under test---neuron activations~\cite{ma2019deepct,chen2019variable}, 
latent input features~\cite{dola2024cit4dnn}, 
or high-level semantic perturbations~\cite{chandrasekaran2021combinatorial}).

DeepCT~\cite{ma2019deepct}, for example, is a white-box method that
systematically explores neuron combinations within a layer
to maximise combinatorial activation coverage
and, thus, uncover more faults---a test method 
aligned with neuron coverage methods discussed earlier. 
CIT4DNN~\cite{dola2024cit4dnn}, in contrast, 
is a black-box method: it learns
a latent representation of the inputs,
and then applies CIT on latent dimensions
to generate rare or
diverse inputs.
Neither method guarantees that the generated
test inputs can be interpreted 
as semantically meaningful perturbations.

Chandrasekaran~\emph{et al.}~\citep{chandrasekaran2021combinatorial}
generate two-way combinations of common image perturbations
(e.g., blur, brightness, and rotation) 
applied to a small, curated input set
to produce synthetic driving scenes for robustness testing.
\sys extends this idea beyond pairwise testing:
it uses second-order tests
to approximate higher-order robustness behaviour
through 3D and 4D tomography.

\sys departs from CIT in a fundamental way:
it does not attempt to construct high-dimensional covering arrays,
as done in prior work on classical software systems 
(e.g., ScalableCA~\cite{luo2024beyond}).
Instead, it leverages second-order tests to train a predictive model
that estimates robustness in multi-perturbation spaces.

\emph{Formal guarantees.}
\sys{}'s 
system-level metamorphic tests 
assess the extent to which a model is 
robust under \emph{some} structured, 
semantic perturbations of a set of inputs. 
This differs from 
\emph{local robustness}, 
where neural-network verifiers 
(e.g., $\upalpha\upbeta$-CROWN~\cite{zhou2024scalable}) 
check for
\emph{all} $\ell_p$-bounded 
perturbations of a given input.
Likewise, \sys{}'s 
aggregate robustness is an 
empirical summary over a semantic 
region~$\Theta$, 
not a \emph{global robustness} certificate 
that requires local robustness of 
\emph{all} possible inputs 
(as in fairness~\cite{khedr2023certifair} and global robustness 
certification~\cite{wang2022efficient,kabaha2024verification}).
These perspectives are complementary: 
\sys finds and ranks 
semantically meaningful regions 
for further analysis, 
while formal verification 
provides worst-case guarantees 
on 
selected subsets.

\section{Conclusion}

\sys reframes deep learning testing
as a predictive modelling task, reasoning 
about how multiple perturbations interact, rather 
than a combinatorial coverage problem.
It trains a surrogate model using 
lower-order 
test results---specifically, 
all combinations of up to two perturbations 
and their severity levels---to
approximate 
robustness 
over 
higher-order tests involving three or more perturbations.
Experiments on five vision 
and language benchmarks 
show that \sys 
predicts third- and fourth-order 
test outcomes
from second-order observations 
with
less than 7\%
aggregate estimation error.
\sys is a step towards 
\emph{test-oriented generalization},
enabling principled extrapolation of test results
to 
unseen multi-perturbation scenarios.
It helps users anticipate model behaviour
under complex, high-dimensional perturbations
beyond what pairwise (2-way) testing can expose.

\balance
\bibliographystyle{ACM-Reference-Format}
\bibliography{bibliography}

\end{document}